\documentclass[11pt]{article}

\usepackage[margin=1in]{geometry}
\usepackage{amsmath,amssymb,amsthm}
\usepackage{booktabs}
\usepackage{enumitem}
\usepackage{float}
\usepackage{graphicx}
\usepackage{hyperref}
\usepackage{natbib}

\newtheorem{theorem}{Theorem}
\newtheorem{lemma}{Lemma}
\newtheorem{assumption}{Assumption}
\newtheorem{definition}{Definition}
\newtheorem{proposition}{Proposition}

\title{ToolChain-CRC: Conformal Risk Control for Agentic AI Under Retrieval and Tool-Use Drift}

\author{
Jeffery Opoku\\
The University of Texas Rio Grande Valley, Edinburg, TX, USA\\
\texttt{jeffery.opoku01@utrgv.edu}
\and
David Banahene\\
Florida International University, Miami, FL, USA\\
\texttt{abanahene54@gmail.com}
}
\date{}

\begin{document}

\maketitle

\begin{abstract}
Modern AI agents retrieve documents, call tools, check intermediate information, and then produce a final answer or action. This creates a risk-control problem that is not visible from the final answer alone. A final response may look acceptable even when the retrieval was weak, a tool output was wrong, or an earlier step was unsupported. We propose ToolChain-CRC, a conformal risk-control method for retrieval-augmented and tool-using agents under drift. The method treats each agent run as a full trajectory of actions, observations, and final output. It builds step-level risk scores, combines them into a trajectory risk score, calibrates an accept-or-intervene rule, and adds an anytime alarm that can stop risky runs before the final answer. We prove trajectory-level risk control under exchangeable calibration runs, give a drift-aware extension with auditable constants, and prove an anytime escalation rule through a supermartingale construction. Experiments cover synthetic tool-chain drift, RAG/tool-use stress tests, public SQuAD-derived retrieval tasks, an API-free agentic QA case study, ablations, target-risk sensitivity checks, 20-seed robustness checks, a drift-margin audit, and a live RAG/tool-use agent benchmark. Across these settings, final-answer-only calibration can miss retrieval and tool failures, while trajectory-level calibration keeps accepted-trajectory risk below the target.
\end{abstract}

\noindent\textbf{Keywords and phrases:} conformal risk control; AI agents; retrieval-augmented generation; tool use; anytime inference; uncertainty quantification.

\medskip
\noindent\textbf{2020 Mathematics Subject Classification:} Primary 62G15; secondary 62L12, 68T50, 62M20.

\section{Introduction}

Modern AI systems do not just answer questions. They look things up, call tools, check intermediate information, and then write a final response. This is useful, but it also creates a new kind of risk. The system can go wrong before the final answer is ever written. It may retrieve the wrong document. It may trust a bad tool output. It may miss a conflict between sources. Or it may combine weak pieces of evidence into an answer that sounds confident.

Final-answer calibration is therefore often too narrow. If we only check the final response, we may miss the earlier steps that made the response unsafe. This is not only a practical problem. It is also a statistical problem: a score that only sees the final answer cannot, by itself, certify failures that happened upstream and are not visible in that final score. This paper studies how to control risk over the whole path an AI agent takes.

The main question is:

\begin{quote}
How can we calibrate the risk of the whole agent trajectory, not only the final answer?
\end{quote}

Conformal prediction gives prediction sets with finite-sample coverage under exchangeability \citep{vovk2005algorithmic,angelopoulos2021gentle}. Conformal risk control extends this idea to user-chosen risks \citep{angelopoulos2024conformal}. These tools are attractive because they can be placed around complicated black-box systems. But many agentic systems are no longer a single black box. They are chains. Retrieval-augmented generation uses outside evidence \citep{lewis2020retrieval}. Tool-using language systems rely on outside calls and intermediate observations \citep{lymperopoulos2025tools}. Recent work has also studied anytime-valid conformal risk control \citep{hultberg2026anytime}, selective acting for language-model deployment \citep{khosravi2026selective}, and conformal safety wrappers for planning \citep{doula2025safepath}.

The challenge is that these agents are sequential. The second step depends on the first step. The final answer depends on retrieved context and tool outputs. When the retrieval system, tool behavior, or user population changes, old calibration data may no longer describe the current agent.

We propose ToolChain-CRC. It is a conformal risk-control layer for tool-using and retrieval-augmented agents. The key idea is simple: treat each agent run as a full trajectory. The trajectory includes the prompt, retrieval steps, tool calls, observations, intermediate checks, and final answer. ToolChain-CRC scores risk at each step, combines those scores into a trajectory risk, and calibrates an accept-or-intervene rule. It can also warn while the agent is still acting, before the final answer is produced.

The method is meant to be practical. A developer may not be able to prove that a large agent is always correct. But the developer can collect past agent runs, score failures, and ask for a rule: continue, retrieve again, call a safer tool, abstain, or send the case to a human. ToolChain-CRC gives that rule together with diagnostics that say where the risk is coming from.

\paragraph{Novelty claim.}
The central claim of this paper is not that conformal risk control is new, or that uncertainty for language models is new. The claim is that the right statistical object for a tool-using agent is the \emph{whole trajectory}. ToolChain-CRC calibrates risk over the sequence of retrieved evidence, tool calls, observations, intermediate decisions, and final output. This is different from final-answer calibration, token-level calibration, and ordinary prompt-response calibration. It also makes the diagnostics useful: the method can say whether risk is coming from retrieval, tool use, synthesis, or drift away from the calibration trajectories.

\medskip
\noindent\fbox{%
\begin{minipage}{0.96\textwidth}
\textbf{Main contributions.}
\begin{enumerate}[label=(\roman*)]
    \item We make the full agent trajectory the calibrated statistical object, rather than only the final answer.
    \item We define concrete step-risk scores, audited step labels, and accepted-trajectory losses that combine retrieval risk, tool-use risk, evidence-support risk, and final-answer risk.
    \item We show why final-answer-only calibration can have a hidden upstream-risk blind spot, then give trajectory-level risk-control guarantees, a drift-aware extension, and an anytime escalation rule.
    \item We report practical diagnostics: source of risk, drift score, effective trajectory sample size, intervention rate, and utility-aware policy choice.
    \item We test the idea using synthetic drift, RAG/tool-use stress tests, a public SQuAD-derived experiment, an agentic QA case study, component ablations, sensitivity checks, 20-seed robustness checks, a drift-margin audit, and a live RAG/tool-use agent benchmark.
\end{enumerate}
\end{minipage}}
\medskip

\begin{figure}[t]
    \centering
    \includegraphics[width=0.98\textwidth]{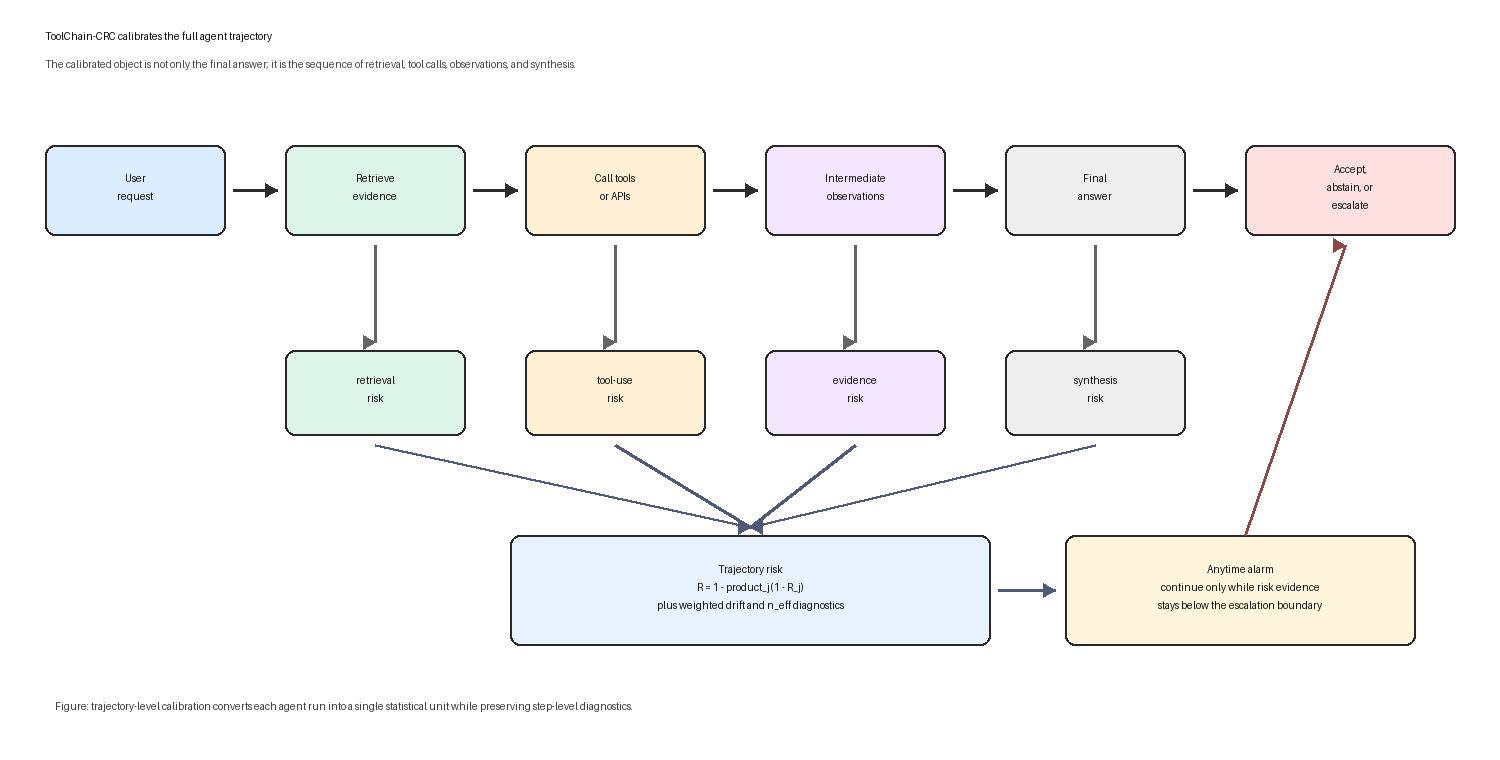}
    \caption{ToolChain-CRC calibrates the complete agent trajectory. The method keeps the trajectory as the statistical unit while preserving step-level diagnostics for retrieval risk, tool-use risk, evidence risk, synthesis risk, drift, effective calibration support, and anytime escalation.}
    \label{fig:architecture}
\end{figure}

\section{Positioning Against Related Work}

This paper sits between conformal risk control, anytime validity, retrieval-augmented generation, and tool-using AI agents.

\paragraph{Conformal risk control.}
Conformal risk control gives a distribution-free way to control losses chosen by the user \citep{angelopoulos2024conformal}. The usual setup calibrates a decision rule on examples that can be treated as exchangeable. ToolChain-CRC keeps the same risk-control goal, but changes the unit of calibration from one example to one full agent trajectory.

\paragraph{Anytime-valid risk control.}
Anytime-valid conformal risk control studies guarantees that remain valid as data arrive over time \citep{hultberg2026anytime}. Recent selective-acting work also studies language-model deployment with anytime risk certificates \citep{khosravi2026selective}. ToolChain-CRC is complementary. Its focus is not only validity over time, but also breaking risk into retrieval, tool use, and final synthesis inside an adaptive agent run.

\paragraph{Tool use and retrieval.}
Uncertainty for tool-using language-model systems is becoming its own problem because both the model and the outside tool can be wrong \citep{lymperopoulos2025tools}. Retrieval-augmented systems add another layer because the retrieved evidence may be stale, irrelevant, or misleading \citep{lewis2020retrieval}. ToolChain-CRC treats retrieval and tool behavior as part of the risk object, not as hidden preprocessing.

\paragraph{Safety wrappers for agents.}
Conformal wrappers have been proposed for specific agent or planning settings, such as safe LLM-based autonomous navigation \citep{doula2025safepath}. The goal here is broader: to define a reusable statistical wrapper for any agent whose behavior can be written as a sequence of actions, observations, and final outputs.

\begin{table}[H]
\centering
\small
\caption{Positioning of ToolChain-CRC relative to nearby research areas. The table is qualitative and is meant to clarify the statistical target of the paper.}
\label{tab:positioning}
\begin{tabular}{p{0.25\textwidth}cccccc}
\toprule
Approach & Final & Traj. & Ret. drift & Tool risk & Alarm & \(n_{\mathrm{eff}}\) \\
\midrule
Standard conformal risk control & Yes & No & No & No & No & No \\
Weighted conformal prediction & Yes & No & Partial & No & No & Sometimes \\
Anytime CRC & Yes & No & No & No & Yes & No \\
Tool-use uncertainty & Yes & No & Partial & Yes & No & No \\
Safe agent wrappers & Task-specific & Partial & No & Partial & Yes & No \\
ToolChain-CRC & Yes & Yes & Yes & Yes & Yes & Yes \\
\bottomrule
\end{tabular}
\end{table}

Table~\ref{tab:positioning} shows the intended contribution. ToolChain-CRC is not meant to replace conformal risk control or anytime-valid risk control. It uses those ideas, but changes the calibrated object from a single response to a full agent trajectory. It also reports diagnostics that matter for retrieval and tool-use deployment.

\section{Setup}

Let \(X_i\) be the user request for agent run \(i\). The agent produces a trajectory
\[
    \tau_i = (X_i, A_{i,1}, O_{i,1}, A_{i,2}, O_{i,2}, \ldots, A_{i,T_i}, O_{i,T_i}, Y_i),
\]
where \(A_{i,j}\) is the \(j\)-th action or tool call, \(O_{i,j}\) is the returned observation, \(T_i\) is the trajectory length, and \(Y_i\) is the final response.

For each step \(j\), we keep two quantities. The first is a raw step-risk score
\[
    S_{i,j}\in[0,1],
\]
where larger values mean the step looks less trustworthy. The second is an audited step label
\[
    Y_{i,j}\in[0,1],
\]
where larger values mean the step actually failed under the audit rule. The raw score is available when the agent is running. The audited label is used later for calibration, evaluation, and reporting.

In a retrieval step, \(S_{i,j}\) can be one minus retrieval confidence, one minus source-support confidence, or a verifier score saying that the retrieved context does not support the question. In a tool step, \(S_{i,j}\) can come from a tool-error flag, an invalid-output flag, disagreement across tools, or a domain verifier. In a synthesis step, \(S_{i,j}\) can be an unsupported-answer score, a factuality score, or a task-specific loss proxy. The corresponding \(Y_{i,j}\) values come from human audit, gold labels, deterministic validators, or a frozen verifier. In the experiments below, the synthetic studies use known retrieval, tool, and synthesis failure indicators; the SQuAD-derived study uses whether the retrieved passage supports the answer; and the agentic QA study uses scripted validators for retrieval support and final answer support.

We combine the raw step scores into a single trajectory score
\[
    S_i^{\mathrm{traj}}
    =
    1-\prod_{j=1}^{T_i}\{1-S_{i,j}\}.
\]
This noisy-or form is large when any important step looks risky. A weighted additive score,
\[
    S_i^{\mathrm{traj}}
    =
    \sum_{j=1}^{T_i} \omega_{i,j} S_{i,j},
    \qquad
    \omega_{i,j}\geq 0,\quad \sum_{j=1}^{T_i}\omega_{i,j}=1,
\]
can be used when a system owner wants to put more weight on some steps than others. The same construction gives an audited trajectory failure label
\[
    Y_i^{\mathrm{traj}}
    =
    1-\prod_{j=1}^{T_i}\{1-Y_{i,j}\}.
\]

The policy threshold \(\lambda\) controls whether the agent accepts the trajectory. In this paper, the default accept rule is
\[
    A_i(\lambda)=\mathbf{1}\{S_i^{\mathrm{traj}}\leq \lambda\}.
\]
The calibration loss is therefore the accepted-trajectory loss
\[
    R_i^{\mathrm{traj}}(\lambda)
    =
    Y_i^{\mathrm{traj}} A_i(\lambda).
\]
This is the exact quantity controlled by conformal risk control. It is zero when the system intervenes or abstains, and it is the audited trajectory failure when the system accepts.

The target is to choose \(\lambda\) so that
\[
    \mathbb{E}\{R^{\mathrm{traj}}(\lambda)\}\leq \alpha,
\]
where \(\alpha\) is the user-chosen accepted-risk level.

\paragraph{Why the trajectory view matters.}
The actions inside a trajectory may be highly dependent. A bad retrieval step can cause a bad tool call, and a bad tool call can cause a bad final answer. ToolChain-CRC does not require the steps inside one trajectory to be independent. It treats the entire run \(\tau_i\) as one statistical object. This is the key modeling choice. We allow complicated dependence within a run, but ask for calibration information across runs.

\begin{proposition}[Final-answer scores can hide upstream risk]
Let \(A\in\{0,1\}\) be the event that the system accepts an answer, let \(V\in\{0,1\}\) be the visible final-answer failure indicator, and let \(U\in\{0,1\}\) be an upstream failure indicator such as unsupported retrieval or a wrong tool output. The full trajectory failure is
\[
    R^{\mathrm{traj}}=\mathbf{1}\{U=1\ \text{or}\ V=1\}.
\]
If the accept rule \(A\) depends only on a final-answer score, then controlling the final visible risk \(\mathbb{E}[A V]\) does not by itself control the trajectory risk \(\mathbb{E}[A R^{\mathrm{traj}}]\). In fact,
\[
    \mathbb{E}[A R^{\mathrm{traj}}]
    =
    \mathbb{E}[A V]
    +
    \mathbb{E}[A(1-V)U].
\]
The second term is upstream risk that is invisible to a final-answer-only score unless extra assumptions connect \(U\) to that score.
\end{proposition}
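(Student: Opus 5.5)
The proof has two parts: an exact identity, and a counterexample showing the identity has real content. For the identity, I will use that \(U,V\in\{0,1\}\). On this domain the indicator of a union splits pointwise as
\[
\mathbf{1}\{U=1 \text{ or } V=1\} = V + (1-V)U .
\]
I will verify this by checking the four cases of \((U,V)\). When \(V=1\), both sides equal \(1\). When \(V=0\), both sides equal \(U\). Multiplying through by \(A\) and taking expectations, which is linearity on bounded random variables, gives
\[
\mathbb{E}[AR^{\mathrm{traj}}]=\mathbb{E}[AV]+\mathbb{E}[A(1-V)U].
\]
No independence or exchangeability is needed, because the identity holds for every realization.

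For the claim that controlling \(\mathbb{E}[AV]\le\alpha\) does not control \(\mathbb{E}[AR^{\mathrm{traj}}]\), I will build an explicit joint law. Let the final-answer score be \(F\), and let \(A=\mathbf{1}\{F\le\lambda\}\) be a function of \(F\) only. I will take \(V\equiv 0\), so that \(\mathbb{E}[AV]=0\le\alpha\) for every \(\lambda\). This holds in particular for the most permissive rule \(A\equiv 1\).

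I will then take \(U\sim\mathrm{Bernoulli}(p)\) independent of \(F\). The second term becomes \(\mathbb{E}[A]\,p\). With \(A\equiv1\), this equals \(p\), which can be any number in \([0,1]\), and in particular can be made larger than \(\alpha\).

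More generally, whenever \(U\) is independent of \((F,V)\), the upstream term factorizes as
\[
\mathbb{E}[A(1-V)]\,\mathbb{P}(U=1).
\]
This exhibits the precise sense in which the upstream term is ``invisible'': the distribution of \((A,V)\) does not determine the law of \(U\). Two joint laws with identical \((F,V)\)-marginals can therefore yield arbitrarily different trajectory risks. That is the formal content of ``unless extra assumptions connect \(U\) to that score.''

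There is no real obstacle in this argument. The only step needing care is stating the non-identifiability claim correctly. I will phrase it as: for any accept rule based only on \(F\), and any level \(\alpha<1\), there exist joint distributions under which \(\mathbb{E}[AV]\le\alpha\) while \(\mathbb{E}[AR^{\mathrm{traj}}]\) equals \(\mathbb{P}(A=1)\). I will then note that this can approach \(1\). The construction sets \(V\equiv0\) and \(U\equiv1\).
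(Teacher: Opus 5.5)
Your proof of the identity is the paper's proof: the pointwise decomposition $\mathbf{1}\{U=1 \text{ or } V=1\}=V+(1-V)U$, multiplied by $A$ and integrated, with no independence needed. The paper supports the non-control claim only with the informal remark that the second term can be positive while $\mathbb{E}[AV]$ is small, so your explicit construction ($V\equiv 0$ with $U\equiv 1$, or $U$ independent of $F$) makes that claim rigorous; state its conclusion precisely as ``for a fixed law of $(F,V)$, varying the conditional law of $U$ makes $\mathbb{E}[AR^{\mathrm{traj}}]$ take every value in $[\mathbb{E}[AV],\mathbb{P}(A=1)]$,'' rather than ``arbitrarily different.''
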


\begin{proof}
Since \(R^{\mathrm{traj}}=\mathbf{1}\{U=1\ \text{or}\ V=1\}=V+(1-V)U\), multiplying by the accept indicator \(A\) and taking expectations gives
\[
    \mathbb{E}[A R^{\mathrm{traj}}]
    =
    \mathbb{E}[A V]
    +
    \mathbb{E}[A(1-V)U].
\]
The first term is the risk seen by final-answer calibration. The second term is accepted upstream failure among cases whose final visible failure indicator is zero. This term can be positive even when \(\mathbb{E}[A V]\) is small. Therefore final-answer-only control is not enough to control the full trajectory risk without an additional assumption that upstream failures are captured by the final-answer score.
\end{proof}

This proposition is the paper's main statistical motivation. It is not saying that final-answer scores are useless. They are useful for final-answer errors. The point is narrower and sharper: when the target risk includes retrieval, tool use, evidence support, and synthesis, the calibrated score must see those parts of the run.

\begin{definition}[Trajectory policy]
A trajectory policy \(\pi_\lambda\) is a rule that maps the partial history
\[
    H_{i,j}=(X_i,A_{i,1},O_{i,1},\ldots,A_{i,j-1},O_{i,j-1})
\]
to the next action \(A_{i,j}\), an escalation decision, or a final answer. In the default accept-threshold version, \(\lambda\) is the largest trajectory score the system is willing to accept. Smaller values are more cautious because fewer trajectories are accepted.
\end{definition}

\begin{assumption}[Monotone accept-threshold policy]
For every realized trajectory \(\tau\), the accepted-trajectory loss \(R^{\mathrm{traj}}(\lambda)\in[0,1]\) increases, or stays the same, as \(\lambda\) increases. In words, accepting more trajectories cannot lower the measured accepted-trajectory loss for the same underlying run.
\end{assumption}

This monotonicity condition is natural when \(\lambda\) controls whether the agent accepts, abstains, retrieves again, or escalates. It is also useful computationally because a one-dimensional grid of candidate policies can be searched.

\section{The ToolChain-CRC Procedure}

The method has a calibration phase and a deployment phase. In calibration, completed agent trajectories are scored and used to select a risk threshold. In deployment, the same scoring rule is applied online while the agent acts.

\medskip
\noindent\fbox{%
\begin{minipage}{0.96\textwidth}
\textbf{Algorithm 1: ToolChain-CRC calibration and deployment}

\begin{enumerate}[leftmargin=1.5em]
    \item Collect calibration trajectories
    \[
        \tau_i=(X_i,A_{i,1},O_{i,1},\ldots,A_{i,T_i},O_{i,T_i},Y_i),
        \qquad i=1,\ldots,n.
    \]
    \item For each trajectory, compute raw step scores \(S_{i,j}\) and audited step labels \(Y_{i,j}\) for retrieval, tool use, evidence support, and final synthesis.
    \item Combine the step scores into a trajectory score \(S_i^{\mathrm{traj}}\), for example
    \[
        S_i^{\mathrm{traj}}=1-\prod_{j=1}^{T_i}\{1-S_{i,j}\}.
    \]
    \item For each candidate \(\lambda\), form \(A_i(\lambda)=\mathbf{1}\{S_i^{\mathrm{traj}}\leq\lambda\}\) and \(R_i^{\mathrm{traj}}(\lambda)=Y_i^{\mathrm{traj}}A_i(\lambda)\). Choose the largest acceptable policy threshold \(\widehat{\lambda}\) whose corrected calibration risk is below the target \(\alpha\).
    \item During deployment, update trajectory features, drift score, effective sample size, and the anytime alarm after each retrieval or tool-use step.
    \item If the trajectory score or anytime alarm crosses its boundary, abstain, retrieve again, call a safer tool, or escalate to human review. Otherwise return the final answer.
\end{enumerate}
\end{minipage}}

\medskip

The algorithm is intentionally modular. The raw score can be factuality error, toxicity, weak source support, wrong-tool risk, unsafe-action risk, or a task-specific score. The conformal layer only requires that the accepted-trajectory loss can be evaluated on calibration trajectories.

\paragraph{Choosing a useful operating point.}
Risk control alone is not enough in a real system. A rule that sends every case to a human may be safe, but it is not useful. After computing the feasible thresholds, we therefore choose the useful threshold among the safe ones. Let
\[
    \widehat{\Lambda}_{\alpha}
    =
    \left\{\lambda\in\Lambda:
    \frac{1}{n+1}\left(1+\sum_{i=1}^{n}R_i^{\mathrm{traj}}(\lambda)\right)
    \leq \alpha
    \right\}.
\]
For any cost weight \(\eta\geq 0\), choose
\[
    \widehat{\lambda}_{\eta}
    =
    \arg\min_{\lambda\in\widehat{\Lambda}_{\alpha}}
    \left\{
    \widehat{R}^{\mathrm{traj}}(\lambda)
    +
    \eta\,\widehat{I}(\lambda)
    \right\},
\]
where \(\widehat{I}(\lambda)\) is the calibration intervention rate. Small \(\eta\) favors lower empirical trajectory risk. Larger \(\eta\) favors fewer abstentions, repeated retrievals, or human reviews. This gives practitioners a simple way to pick an operating point after the risk guarantee has already been enforced.

\section{Split Trajectory Risk Control}

Suppose we have calibration trajectories
\[
    \tau_1,\ldots,\tau_n
\]
that are exchangeable with a future trajectory \(\tau_{n+1}\). For a grid of candidate thresholds \(\Lambda\), compute calibration risks
\[
    R_i^{\mathrm{traj}}(\lambda), \qquad i=1,\ldots,n,\quad \lambda\in\Lambda.
\]

Choose
\[
    \widehat{\lambda}
    =
    \sup\left\{\lambda\in\Lambda:
    \frac{1}{n+1}\left(1+\sum_{i=1}^{n}R_i^{\mathrm{traj}}(\lambda)\right)
    \leq \alpha
    \right\}.
\]

The extra \(1/(n+1)\) term is a conservative finite-sample correction.

\begin{theorem}[Trajectory conformal risk control with adaptive tool calls]
Assume the calibration trajectories and the future trajectory are exchangeable as complete trajectories. The actions inside each trajectory may be adaptive and dependent on earlier observations in the same trajectory. Assume also that the monotone accept-threshold policy condition holds on the ordered grid \(\Lambda\). If \(R_i^{\mathrm{traj}}(\lambda)\in[0,1]\) and the rule above selects \(\widehat{\lambda}\), then the future trajectory satisfies
\[
    \mathbb{E}\{R_{n+1}^{\mathrm{traj}}(\widehat{\lambda})\}\leq \alpha
\]
up to the usual finite-grid discretization error from searching over \(\Lambda\).
\end{theorem}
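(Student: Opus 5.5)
Our plan is to reduce the statement to the standard conformal risk control argument of \citet{angelopoulos2024conformal}, treating each complete trajectory \(\tau_i\) as one exchangeable data point. The adaptivity of the actions \(A_{i,j}\) given \(H_{i,j}\) enters only through the map \(\tau_i\mapsto R_i^{\mathrm{traj}}(\cdot)\). This map is a fixed deterministic function of the full trajectory, with the scoring rule and audit rule frozen before calibration. So the random functions \(R_1^{\mathrm{traj}}(\cdot),\ldots,R_{n+1}^{\mathrm{traj}}(\cdot)\) inherit exchangeability from \((\tau_1,\ldots,\tau_{n+1})\), whatever the dependence inside each run. The first step is to make this measurability and exchangeability transfer explicit.

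Second, we introduce the oracle threshold computed with the test trajectory included,
\[
    \lambda' = \sup\Bigl\{\lambda\in\Lambda : \tfrac{1}{n+1}\textstyle\sum_{i=1}^{n+1} R_i^{\mathrm{traj}}(\lambda)\leq \alpha\Bigr\}.
\]
Because \(R_{n+1}^{\mathrm{traj}}(\lambda)\leq 1\), every \(\lambda\) feasible for the calibration rule is feasible for the oracle rule. Hence \(\widehat{\lambda}\leq\lambda'\). By the monotone accept-threshold assumption, \(R_{n+1}^{\mathrm{traj}}(\widehat{\lambda})\leq R_{n+1}^{\mathrm{traj}}(\lambda')\).

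Third, \(\lambda'\) is a symmetric function of the multiset \(\{R_i^{\mathrm{traj}}(\cdot)\}_{i=1}^{n+1}\). Exchangeability therefore gives \(\mathbb{E}\{R_{n+1}^{\mathrm{traj}}(\lambda')\}=\mathbb{E}\{\tfrac{1}{n+1}\sum_{i=1}^{n+1}R_i^{\mathrm{traj}}(\lambda')\}\). The inner average is at most \(\alpha\) whenever the supremum is attained.

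The main obstacle is the boundary behaviour of the supremum, which is where the "finite-grid discretization" caveat enters. On a finite ordered grid the supremum is a maximum, so feasibility at \(\lambda'\) holds exactly. One remaining case is that no grid point is feasible. We handle it by requiring \(\min\Lambda\) to be a threshold with \(R^{\mathrm{traj}}(\min\Lambda)\equiv 0\), for instance \(\lambda<0\) so that nothing is accepted. For a continuum \(\Lambda\), we would need right-continuity of \(\lambda\mapsto R^{\mathrm{traj}}(\lambda)\); the indicator \(\mathbf{1}\{S^{\mathrm{traj}}\leq\lambda\}\) is right-continuous, so this holds. Otherwise we would state the bound with the slack between consecutive grid points. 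With these conventions the argument gives the exact bound \(\mathbb{E}\{R_{n+1}^{\mathrm{traj}}(\widehat{\lambda})\}\leq\alpha\).
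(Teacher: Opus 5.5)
Your proposal is correct and takes the same route as the paper, which reduces the claim to split conformal risk control with complete trajectories as the exchangeable units; your oracle threshold \(\lambda'\), the inclusion of feasible sets giving \(\widehat{\lambda}\le\lambda'\), and the symmetry step spell out what the paper's proof only gestures at, and on a finite grid they give the bound exactly, with no discretization slack. One correction to your continuum aside: since the loss is nondecreasing and you take a supremum, attainment needs left-continuity, but \(\mathbf{1}\{S^{\mathrm{traj}}\le\lambda\}\) is only right-continuous, so on a continuum \(\lambda'\) can sit at an upward jump and be infeasible (the guarantee can fail when scores have atoms), though this does not affect the finite-grid theorem.
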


\begin{proof}
This is the split conformal risk-control argument applied to the trajectory-level loss. The only change is the unit of analysis. Instead of applying the argument to a single prompt-response pair, we apply it to the full trajectory \(\tau_i\). Dependence among actions inside \(\tau_i\) is allowed because \(R_i^{\mathrm{traj}}(\lambda)\) is a measurable function of the complete trajectory. Exchangeability is required only across complete trajectories.

For a fixed \(\lambda\), exchangeability gives the same average risk for each of the \(n+1\) calibration-plus-test trajectories. The conservative correction \(1/(n+1)\) is the standard split-CRC correction for bounded losses in \([0,1]\). The monotonicity assumption makes the candidate policies nested as \(\lambda\) increases, so the largest feasible threshold selected by the empirical rule inherits the split-CRC risk guarantee. The finite grid only adds the usual approximation error from restricting the search to \(\Lambda\).
\end{proof}

\begin{proposition}[Utility-aware selection preserves feasibility]
Suppose \(\widehat{\lambda}_{\eta}\in\widehat{\Lambda}_{\alpha}\) is selected by the utility-aware rule above. Under the monotone accept-threshold policy condition, \(\widehat{\lambda}_{\eta}\) is at least as cautious as the largest feasible split-CRC threshold \(\widehat{\lambda}\). Consequently,
\[
    R_{n+1}^{\mathrm{traj}}(\widehat{\lambda}_{\eta})
    \leq
    R_{n+1}^{\mathrm{traj}}(\widehat{\lambda})
\]
for every future trajectory, and the same trajectory risk-control guarantee applies to \(\widehat{\lambda}_{\eta}\).
\end{proposition}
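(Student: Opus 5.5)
The plan has three steps: show that $\widehat{\lambda}_{\eta}\le\widehat{\lambda}$ using only set membership, convert this to a pointwise loss comparison through the Monotone accept-threshold policy assumption, and then take expectations and invoke the Trajectory conformal risk control theorem.

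\emph{Step 1: ordering of thresholds.} By construction $\widehat{\lambda}_{\eta}\in\widehat{\Lambda}_{\alpha}$. The split-CRC threshold is $\widehat{\lambda}=\sup\widehat{\Lambda}_{\alpha}$, taken over the same ordered grid $\Lambda$ and the same feasibility constraint. Every element of $\widehat{\Lambda}_{\alpha}$ is therefore at most $\widehat{\lambda}$, so $\widehat{\lambda}_{\eta}\le\widehat{\lambda}$. Since $\Lambda$ is finite, the supremum is attained and $\widehat{\lambda}\in\widehat{\Lambda}_{\alpha}$. In particular $\widehat{\Lambda}_{\alpha}$ is nonempty whenever $\widehat{\lambda}_{\eta}$ is defined. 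This step does not use how the utility-aware objective $\widehat{R}^{\mathrm{traj}}(\lambda)+\eta\widehat{I}(\lambda)$ is minimized, only that the minimizer lies in the feasible set.

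\emph{Step 2: pointwise comparison of losses.} Fix any realization of the future trajectory $\tau_{n+1}$ together with the calibration data. The Monotone accept-threshold policy assumption says that $\lambda\mapsto R_{n+1}^{\mathrm{traj}}(\lambda)$ is nondecreasing for that realized run. Evaluating at the two data-dependent points $\widehat{\lambda}_{\eta}\le\widehat{\lambda}$ gives $R_{n+1}^{\mathrm{traj}}(\widehat{\lambda}_{\eta})\le R_{n+1}^{\mathrm{traj}}(\widehat{\lambda})$. The comparison is pathwise, so it is legitimate even though both thresholds depend on the calibration sample. For the default rule it can also be checked directly: $\mathbf{1}\{S^{\mathrm{traj}}_{n+1}\le\widehat{\lambda}_{\eta}\}\le\mathbf{1}\{S^{\mathrm{traj}}_{n+1}\le\widehat{\lambda}\}$, and this is multiplied by $Y^{\mathrm{traj}}_{n+1}\ge 0$.

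\emph{Step 3: transfer the guarantee.} Take expectations of the pointwise inequality, which is valid because the losses are bounded in $[0,1]$. This gives $\mathbb{E}\{R_{n+1}^{\mathrm{traj}}(\widehat{\lambda}_{\eta})\}\le\mathbb{E}\{R_{n+1}^{\mathrm{traj}}(\widehat{\lambda})\}$. Applying the Trajectory conformal risk control theorem to the right-hand side bounds it by $\alpha$, up to the same finite-grid discretization caveat.

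The main obstacle is Step 3's reliance on the theorem for $\widehat{\lambda}$ itself. The selection of $\widehat{\lambda}_{\eta}$ is data dependent and may be non-monotone in the data, so one cannot rerun the exchangeability argument directly on $\widehat{\lambda}_{\eta}$. The proof therefore needs to avoid that and obtain the bound by domination only. That is why the pathwise comparison in Step 2 must hold on every realization rather than merely in expectation. I would state this explicitly so that no additional exchangeability claim about $\widehat{\lambda}_{\eta}$ is needed.
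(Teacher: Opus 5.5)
Your proposal is correct and follows the paper's proof. You order the thresholds via membership in $\widehat{\Lambda}_{\alpha}$, compare the losses pathwise using the monotone accept-threshold condition, then take expectations and invoke the trajectory CRC theorem. The only cosmetic difference is that you get $\widehat{\lambda}_{\eta}\le\widehat{\lambda}$ directly from $\widehat{\lambda}=\sup\widehat{\Lambda}_{\alpha}$, whereas the paper first notes that $\widehat{\Lambda}_{\alpha}$ is a lower set, which that step does not actually need.
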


\begin{proof}
Because accepted-trajectory loss is nondecreasing in the accept threshold \(\lambda\), the feasible set \(\widehat{\Lambda}_{\alpha}\) is a lower set on the ordered grid. Its largest element is the split-CRC threshold \(\widehat{\lambda}\). Any utility-aware choice inside this feasible set is therefore at least as cautious as \(\widehat{\lambda}\). The pointwise monotonicity of the trajectory risk gives
\[
    R_{n+1}^{\mathrm{traj}}(\widehat{\lambda}_{\eta})
    \leq
    R_{n+1}^{\mathrm{traj}}(\widehat{\lambda}).
\]
Taking expectations and applying the previous theorem gives the result.
\end{proof}

\begin{proposition}[Source-of-risk decomposition]
For the noisy-or trajectory risk
\[
    R^{\mathrm{traj}}=1-\prod_{j=1}^{T}(1-R_j),
\]
where \(R_j\in[0,1]\), the following bounds hold:
\[
    \max_{1\leq j\leq T}R_j
    \leq
    R^{\mathrm{traj}}
    \leq
    \sum_{j=1}^{T}R_j .
\]
Moreover, if all step risks are small, then
\[
    R^{\mathrm{traj}}
    =
    \sum_{j=1}^{T}R_j
    -
    \sum_{j<k}R_jR_k
    +
    O\left(\sum_{j<k<\ell}R_jR_kR_\ell\right).
\]
\end{proposition}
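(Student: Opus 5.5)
The plan is to work entirely with the complementary product \(P=\prod_{j=1}^{T}(1-R_j)\), so that \(R^{\mathrm{traj}}=1-P\), and to use only the elementary fact that each factor \(1-R_j\) lies in \([0,1]\).

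\textbf{Lower bound.} Fix any index \(k\). Every factor with \(j\neq k\) lies in \([0,1]\), so dropping those factors cannot increase the product:
\[
P\le 1-R_k .
\]
Hence \(R^{\mathrm{traj}}=1-P\ge R_k\). Since \(k\) was arbitrary, taking the maximum over \(k\) gives \(\max_j R_j\le R^{\mathrm{traj}}\).

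\textbf{Upper bound.} This is the Weierstrass product inequality
\[
\prod_{j=1}^{T}(1-R_j)\ge 1-\sum_{j=1}^{T}R_j ,
\]
which I would prove by induction on \(T\). The case \(T=1\) is an equality. For the inductive step, write \(P_{T}=P_{T-1}(1-R_T)\). Because \(R_T\in[0,1]\), the factor \(1-R_T\) is nonnegative, so the induction hypothesis gives
\[
P_{T}\ge \Bigl(1-\sum_{j<T}R_j\Bigr)(1-R_T)=1-\sum_{j\le T}R_j+R_T\sum_{j<T}R_j .
\]
The last term is nonnegative, so it can be dropped, which yields \(P_T\ge 1-\sum_{j\le T}R_j\). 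Then \(R^{\mathrm{traj}}=1-P\le\sum_j R_j\).

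The one point needing care is the multiplication step when the induction hypothesis's right-hand side \(1-\sum_{j<T}R_j\) is negative. Multiplying it by a factor in \([0,1]\) is still legitimate, because the inequality \(P_{T-1}\ge 1-\sum_{j<T}R_j\) is simply multiplied through by the nonnegative number \(1-R_T\). So the argument does not rely on the right-hand side being positive.

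\textbf{Expansion.} This follows from expanding the product exactly:
\[
\prod_{j=1}^{T}(1-R_j)=\sum_{S\subseteq\{1,\dots,T\}}(-1)^{|S|}\prod_{j\in S}R_j .
\]
Subtracting from \(1\) gives the exact inclusion–exclusion identity
\[
R^{\mathrm{traj}}=\sum_j R_j-\sum_{j<k}R_jR_k+\sum_{j<k<\ell}R_jR_kR_\ell-\cdots .
\]
It remains to show that everything from the third-order terms onward is \(O\bigl(\sum_{j<k<\ell}R_jR_kR_\ell\bigr)\).

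\textbf{Main obstacle: the remainder estimate.} The task is to dominate the order-\(m\) elementary symmetric sums \(e_m\), for \(m\ge3\), by \(e_3\). I would use the Newton–Maclaurin-type bound
\[
e_m\le \frac{e_3\,e_1^{\,m-3}}{(m-3)!},
\]
which holds because every \(m\)-subset contains a \(3\)-subset and the overcounting is controlled. If the step risks are small in the sense \(\sum_j R_j\le 1\), or more generally \(\sum_j R_j\) is bounded, the tail series satisfies
\[
\sum_{m\ge 3} e_m\le e_3\,e^{e_1},
\]
so the remainder is \(O(e_3)\) with a constant depending only on the bound for \(\sum_j R_j\). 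I would state this explicitly as the precise meaning of "all step risks are small."
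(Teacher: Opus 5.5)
Your proposal is correct and follows essentially the same route as the paper: the lower bound by discarding the factors $1-R_j\in[0,1]$ with $j\neq k$, the union bound by induction (one of the two routes the paper names), and the expansion via inclusion--exclusion. The one addition is your explicit tail estimate $\sum_{m\ge 3}e_m\le e_3\,e^{e_1}$, which justifies the $O(\cdot)$ remainder that the paper leaves implicit; it can be sharpened by reading $R^{\mathrm{traj}}$ as the probability of a union of independent events with probabilities $R_j$, since the Bonferroni inequalities then give $0\le R^{\mathrm{traj}}-(e_1-e_2)\le e_3$ with no smallness assumption at all.
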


\begin{proof}
The lower bound follows because \(\prod_j(1-R_j)\leq 1-R_k\) for every \(k\), so \(1-\prod_j(1-R_j)\geq R_k\). The upper bound is the union bound, which follows by expanding the product or by induction. The expansion is the inclusion-exclusion formula for \(1-\prod_j(1-R_j)\).
\end{proof}

This proposition gives a simple diagnostic interpretation. A large trajectory risk can come from one severe step or from several moderate risks that compound. This is why final-answer-only calibration can miss failures: the final answer may look acceptable even when retrieval or tool-use risk has already accumulated.

\section{Weighted Tool-Use Drift}

Exchangeability is often too strong for real agent systems. Retrieval corpora change, APIs change, user questions change, and the agent may learn new tool-use patterns. To handle this, define a trajectory representation
\[
    \phi(\tau_i)
\]
that summarizes the request, retrieved evidence, tool calls, and final response. Let \(z_t=\phi(\tau_t)\). For a current trajectory or window of trajectories, assign calibration weights
\[
    w_i(t)
    =
    \frac{\exp\{-d(z_i,z_t)/h\}}
    {\sum_{k=1}^{n}\exp\{-d(z_k,z_t)/h\}},
\]
where \(d\) measures trajectory similarity and \(h>0\) is a bandwidth.

The weighted effective sample size is
\[
    n_{\mathrm{eff}}(t)
    =
    \frac{\left(\sum_{i=1}^{n}w_i(t)\right)^2}
    {\sum_{i=1}^{n}w_i(t)^2}.
\]

Small \(n_{\mathrm{eff}}\) means the system is relying on too few calibration trajectories, so the conformal threshold should be treated as fragile.

\begin{assumption}[Lipschitz drift of trajectory risk]
Let \(P_t\) be the current trajectory distribution and let \(P_{w,t}\) be the weighted calibration distribution induced by the weights \(w_i(t)\). For the selected policy class, suppose
\[
    \left|
    \mathbb{E}_{P_t}\{R^{\mathrm{traj}}(\lambda)\}
    -
    \mathbb{E}_{P_{w,t}}\{R^{\mathrm{traj}}(\lambda)\}
    \right|
    \leq
    L\,W_1(P_t,P_{w,t})
\]
for all candidate \(\lambda\), where \(W_1\) is a Wasserstein-1 discrepancy on trajectory representations.
\end{assumption}

\begin{theorem}[Approximate risk control under tool-use drift]
Fix a deployment time \(t\), and let \(m=|\Lambda|\) be the number of candidate thresholds. Conditional on the trajectory weights \(w_i(t)\), suppose the weighted calibration rule selects \(\widehat{\lambda}_t\) so that the weighted empirical trajectory risk is at most
\[
    \alpha-\epsilon_t,
\]
where
\[
    \epsilon_t
    =
    \sqrt{\frac{\log(m/\delta)}{2n_{\mathrm{eff}}(t)}} .
\]
Under the Lipschitz drift assumption, with probability at least \(1-\delta\) over the weighted calibration sample,
\[
    \mathbb{E}_{P_t}\{R^{\mathrm{traj}}(\widehat{\lambda}_t)\}
    \leq
    \alpha
    +
    L\,W_1(P_t,P_{w,t}).
\]
\end{theorem}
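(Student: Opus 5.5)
The argument splits the gap between the true current risk and the weighted empirical risk into a statistical term and a drift term. For any fixed \(\lambda\in\Lambda\), write
\[
\mathbb{E}_{P_t}\{R^{\mathrm{traj}}(\lambda)\}
=
\widehat{R}_w(\lambda)
+\bigl[\mathbb{E}_{P_{w,t}}\{R^{\mathrm{traj}}(\lambda)\}-\widehat{R}_w(\lambda)\bigr]
+\bigl[\mathbb{E}_{P_t}\{R^{\mathrm{traj}}(\lambda)\}-\mathbb{E}_{P_{w,t}}\{R^{\mathrm{traj}}(\lambda)\}\bigr],
\]
where \(\widehat{R}_w(\lambda)=\sum_i w_i(t)R_i^{\mathrm{traj}}(\lambda)\) is the weighted empirical risk. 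The drift bracket is handled directly by the Lipschitz drift assumption. It is at most \(L\,W_1(P_t,P_{w,t})\) uniformly over all candidate \(\lambda\). This uniformity is what lets the bound apply to the data-dependent \(\widehat{\lambda}_t\).

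For the statistical bracket, I condition on the weights \(w_i(t)\), so they are fixed numbers. I also need \(P_{w,t}\) to be interpreted as the population counterpart of \(\widehat{R}_w\). Concretely, I treat the calibration trajectories as independent given their representations, with \(\mathbb{E}_{P_{w,t}}\{R^{\mathrm{traj}}(\lambda)\}=\sum_i w_i(t)\,\mathbb{E}\{R_i^{\mathrm{traj}}(\lambda)\}\). Then \(\widehat{R}_w(\lambda)\) is a weighted sum of independent \([0,1]\)-valued variables. Because the weights sum to one, \(n_{\mathrm{eff}}(t)=1/\sum_i w_i(t)^2\). Hoeffding's inequality with weighted ranges gives a one-sided bound. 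With probability at least \(1-\delta/m\),
\[
\mathbb{E}_{P_{w,t}}\{R^{\mathrm{traj}}(\lambda)\}-\widehat{R}_w(\lambda)\leq \sqrt{\tfrac{\log(m/\delta)}{2}\textstyle\sum_i w_i(t)^2}=\epsilon_t .
\]

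A union bound over the \(m\) grid points makes this hold simultaneously for all \(\lambda\in\Lambda\) with probability at least \(1-\delta\). On that event it holds in particular at \(\lambda=\widehat{\lambda}_t\), regardless of how \(\widehat{\lambda}_t\) was chosen. Combining the pieces with the selection rule \(\widehat{R}_w(\widehat{\lambda}_t)\leq\alpha-\epsilon_t\) gives
\[
\mathbb{E}_{P_t}\{R^{\mathrm{traj}}(\widehat{\lambda}_t)\}\leq(\alpha-\epsilon_t)+\epsilon_t+L\,W_1(P_t,P_{w,t}),
\]
which is the claim. Monotonicity in \(\lambda\) is not needed here, because the union bound covers the whole grid.

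The main obstacle is the conditioning step. The weights depend on \(z_i=\phi(\tau_i)\), and those same trajectories produce the losses, so the \(R_i\) are not independent of the weights. I would first state explicitly the conditional-independence reading of \(P_{w,t}\): losses independent across \(i\) given the \(z_i\), with \(P_{w,t}\) the corresponding weighted mixture of conditional laws. Under that reading, Hoeffding applies conditionally. If this is too strong, the fallback is to compute the weights from a separate split of representations, or from \(z_t\) only, and to note this as an added assumption.
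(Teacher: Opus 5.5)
Your proof is correct and follows the paper's own argument: conditional Hoeffding for the weighted average with $n_{\mathrm{eff}}(t)=1/\sum_i w_i(t)^2$, a union bound over the $m$ grid points to get $\epsilon_t$, and then the Lipschitz drift assumption to pass from $P_{w,t}$ to $P_t$. Your conditional-independence reading of $P_{w,t}$ (losses independent given the $z_i$, with $P_{w,t}$ the weighted mixture of conditional laws) makes explicit what the paper's phrase ``conditional on the weights'' assumes without stating, and your remark that monotonicity is not needed also agrees with the paper's proof, which does not use it.
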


\begin{proof}
For any fixed \(\lambda\), the weighted empirical risk is a weighted average of bounded losses in \([0,1]\). Conditional on the weights, Hoeffding's inequality for weighted bounded averages gives concentration at rate \(n_{\mathrm{eff}}(t)=1/\sum_i w_i(t)^2\). A union bound over the \(m\) candidate thresholds gives the correction \(\epsilon_t\). Therefore, with probability at least \(1-\delta\),
\[
    \mathbb{E}_{P_{w,t}}\{R^{\mathrm{traj}}(\widehat{\lambda}_t)\}
    \leq
    \alpha .
\]
The Lipschitz drift assumption then transfers this guarantee from the weighted calibration distribution \(P_{w,t}\) to the current trajectory distribution \(P_t\), adding the discrepancy term \(L W_1(P_t,P_{w,t})\).
\end{proof}

The theorem separates three practical quantities. The target \(\alpha\) is chosen by the user. The effective sample size term tells whether the calibration evidence is strong enough. The Wasserstein term measures whether the current agent behavior has drifted away from the weighted calibration trajectories.

\paragraph{How to read the drift guarantee.}
The drift result is deliberately stated as an approximate guarantee, not as a fully distribution-free guarantee. The exact finite-sample guarantee comes from exchangeability of complete trajectories. Once the deployment distribution drifts, no method can certify the same risk level without some assumption connecting the calibration distribution to the current distribution. Here that connection is the Lipschitz drift assumption in the chosen trajectory representation. The bound is useful when nearby trajectories in representation space have similar trajectory risk. It is less useful if the representation misses an important failure source, if the weights are supported by only a few calibration examples, or if the current system has moved into a genuinely new regime.

For this reason, ToolChain-CRC reports \(n_{\mathrm{eff}}\), drift score, and intervention rate together with the calibrated decision. A small \(n_{\mathrm{eff}}\) or a large drift score should not be hidden; it is evidence that the system is leaving the region where the calibration data can support strong claims. In that case, the statistically honest action is to abstain more often, collect new calibration trajectories, or route the case to human review.

\paragraph{How the drift constants are estimated.}
In practice, the drift pieces are not known exactly: \(P_t\), \(P_{w,t}\), and \(L\) must be estimated. ToolChain-CRC therefore treats the drift bound as an auditable reporting tool, not as a magic certificate. Let \(z_i=\phi(\tau_i)\) be the trajectory representation for calibration run \(i\), and let \(\bar z_t\) be the mean representation in the current deployment window. We report the empirical drift proxy
\[
    \widehat D_t
    =
    \left\|
    \bar z_t-\sum_{i=1}^n w_i(t)z_i
    \right\|_2 ,
\]
together with \(n_{\mathrm{eff}}(t)\). This proxy is simple, reproducible, and easy to audit. A richer implementation may replace it with an optimal-transport or nearest-neighbor distance, but the reporting logic is the same.

The constant \(L\) is also estimated rather than assumed. On held-out calibration trajectories, compute pairwise slopes
\[
    \ell_{i\ell}
    =
    \frac{
    \left|Y_i^{\mathrm{traj}}-Y_{\ell}^{\mathrm{traj}}\right|
    }{
    \|z_i-z_{\ell}\|_2+\varepsilon
    },
    \qquad i\neq \ell ,
\]
where \(\varepsilon>0\) avoids division by zero. We use a high empirical quantile, for example the 90th or 95th percentile, as a conservative estimate \(\widehat L\). The operational drift margin is then
\[
    \widehat\Delta_t(\widehat L)
    =
    \widehat L\,\widehat D_t
    +
    \sqrt{\frac{\log(m/\delta)}{2n_{\mathrm{eff}}(t)}} .
\]
When this margin is small, the weighted calibration evidence is close to the current deployment window. When it is large, the right action is not to pretend the guarantee is strong; the system should abstain more, collect fresh calibration trajectories, or report that the current window is outside the supported region. We also recommend reporting a sensitivity curve over several values of \(L\), because reviewers and users can then see whether the conclusion depends on a fragile constant.

\section{Diagnostics}

ToolChain-CRC is designed to return more than a binary accept-or-abstain decision. Its diagnostics are part of the method because they explain why a trajectory is risky.

\paragraph{Retrieval contribution.}
Let \(R_{\mathrm{ret}}\) measure whether the retrieved evidence is irrelevant, stale, contradictory, or unsupported. A high retrieval contribution means the agent is likely to fail before any reasoning or synthesis begins.

\paragraph{Tool-use contribution.}
Let \(R_{\mathrm{tool}}\) measure wrong tool selection, noisy tool output, failed API calls, or disagreement across tools. This is important because a final answer may look fluent while depending on an unreliable tool output.

\paragraph{Synthesis contribution.}
Let \(R_{\mathrm{syn}}\) measure whether the final answer is unsupported by the retrieved evidence and tool observations. This is closest to ordinary final-answer calibration, but here it is only one part of the trajectory risk.

\paragraph{Trajectory drift.}
The drift score compares current trajectory features with weighted calibration trajectories. It can be computed on prompt embeddings, retrieval features, tool-call patterns, observation summaries, or final-answer features. A large value means the current agent run is outside the region where calibration evidence is strong.

\paragraph{Effective trajectory sample size.}
The effective sample size
\[
    n_{\mathrm{eff}}=\frac{(\sum_i w_i)^2}{\sum_i w_i^2}
\]
reports how many calibration trajectories effectively support the current decision. A low value is a warning that the conformal threshold may be fragile even if the empirical risk looks acceptable.

\paragraph{Intervention rate.}
The intervention rate records how often the agent abstains, retrieves again, calls a safer tool, or asks for human review. This makes the method auditable: a system that controls risk only by stopping almost everything is not useful, while a system that rarely intervenes may be unsafe under drift.

\section{Anytime Risk Alarm}

An agent should not always wait until the final answer to detect risk. Let \(\mathcal F_s\) be the information available after step or run \(s\). The threshold \(\lambda\) is chosen from calibration data before deployment, so during deployment it is fixed, or more generally \(\mathcal F_0\)-measurable. Define the excess risk signal
\[
    E_s(\lambda)=R_s(\lambda)-\alpha .
\]
Positive values mean that the current step or run is contributing more risk than the target allows.

For a predictable betting parameter \(\eta_s\geq 0\), define
\[
    M_t(\lambda)
    =
    \prod_{s=1}^{t}
    \exp\{\eta_s E_s(\lambda)-\psi_s(\eta_s)\},
\]
where \(\psi_s(\eta_s)\) is chosen before observing step \(s\). The role of \(\psi_s\) is to make each exponential factor have conditional expectation at most one under acceptable operation. This is the standard test-supermartingale idea behind safe anytime-valid inference and e-processes \citep{howard2021timeuniform,ramdas2023game}.

\begin{lemma}[Supermartingale alarm construction]
Suppose that under the acceptable-risk regime, for every deployment step \(s\),
\[
    \mathbb E\!\left[
        \exp\{\eta_s E_s(\lambda)-\psi_s(\eta_s)\}
        \,\middle|\, \mathcal F_{s-1}
    \right]\leq 1,
\]
where \(\eta_s\) and \(\psi_s(\eta_s)\) are \(\mathcal F_{s-1}\)-measurable. Then \(M_t(\lambda)\) is a nonnegative supermartingale with \(M_0(\lambda)=1\).
\end{lemma}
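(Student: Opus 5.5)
The plan is to verify the three defining properties of a nonnegative supermartingale directly from the product form of \(M_t(\lambda)\): adaptedness, integrability with nonnegativity, and the one-step supermartingale inequality. Write the factors as
\[
    G_s(\lambda)=\exp\{\eta_s E_s(\lambda)-\psi_s(\eta_s)\},
    \qquad
    M_t(\lambda)=\prod_{s=1}^{t}G_s(\lambda),
\]
with the empty-product convention \(M_0(\lambda)=1\).

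\textbf{Step 1: nonnegativity and adaptedness.} Each \(G_s(\lambda)\) is an exponential, so it is strictly positive, and hence \(M_t(\lambda)\geq 0\). The threshold \(\lambda\) is \(\mathcal F_0\)-measurable. \(R_s(\lambda)\) is observed after step \(s\), so \(E_s(\lambda)\) is \(\mathcal F_s\)-measurable. The quantities \(\eta_s\) and \(\psi_s(\eta_s)\) are \(\mathcal F_{s-1}\subseteq\mathcal F_s\)-measurable. Therefore \(G_s(\lambda)\) is \(\mathcal F_s\)-measurable, and so is \(M_t(\lambda)\) for every \(t\geq s\).

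\textbf{Step 2: the one-step inequality.} For \(t\geq 1\), the factor \(M_{t-1}(\lambda)\) is \(\mathcal F_{t-1}\)-measurable and nonnegative. We pull it out of the conditional expectation. This is valid for nonnegative variables without any integrability assumption, using the generalized conditional expectation on \([0,\infty]\). It gives
\[
    \mathbb E[M_t(\lambda)\mid\mathcal F_{t-1}]
    =M_{t-1}(\lambda)\,\mathbb E[G_t(\lambda)\mid\mathcal F_{t-1}]
    \leq M_{t-1}(\lambda),
\]
where the inequality is exactly the hypothesis of the lemma at step \(s=t\).

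\textbf{Step 3: integrability.} We argue by induction using the tower property. Since
\[
    \mathbb E[M_t(\lambda)]
    =\mathbb E\big[\mathbb E[M_t(\lambda)\mid\mathcal F_{t-1}]\big]
    \leq \mathbb E[M_{t-1}(\lambda)],
\]
we obtain \(\mathbb E[M_t(\lambda)]\leq\cdots\leq M_0(\lambda)=1\). So each \(M_t(\lambda)\) is integrable, and the inequality from Step 2 is an ordinary supermartingale inequality.

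The main point requiring care is the order of these steps. Integrability is not available a priori: \(E_s(\lambda)\in[-\alpha,1-\alpha]\) is bounded, but \(\eta_s\) is only predictable and need not be bounded. That is why Step 2 uses the nonnegative (extended) conditional expectation first, and integrability is deduced afterwards in Step 3 rather than assumed. A second point to state explicitly is that the conditional-moment hypothesis must hold under the same measure, namely the acceptable-risk regime, for which the supermartingale property is being claimed.
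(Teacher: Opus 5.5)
Your proof is correct and follows the paper's argument: nonnegativity comes from the exponential form, \(M_{t-1}(\lambda)\) is pulled out of the conditional expectation as an \(\mathcal F_{t-1}\)-measurable factor, and the hypothesis at step \(t\) closes the inequality. Your explicit checks of adaptedness, and of integrability (via the extended conditional expectation for nonnegative variables followed by induction), fill in details the paper leaves implicit.
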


\begin{proof}
The process is nonnegative by construction and \(M_0(\lambda)=1\). Since \(M_{t-1}(\lambda)\) is \(\mathcal F_{t-1}\)-measurable,
\[
\begin{aligned}
    \mathbb E\!\left[M_t(\lambda)\mid \mathcal F_{t-1}\right]
    &=
    M_{t-1}(\lambda)
    \mathbb E\!\left[
        \exp\{\eta_t E_t(\lambda)-\psi_t(\eta_t)\}
        \,\middle|\, \mathcal F_{t-1}
    \right]  \\
    &\leq M_{t-1}(\lambda).
\end{aligned}
\]
Thus \(M_t(\lambda)\) is a supermartingale.
\end{proof}

When \(M_t(\lambda)\) becomes large, the system has evidence that realized risk is exceeding the target.

The resulting alarm rule is
\[
    \text{escalate at time }t
    \quad\text{if}\quad
    M_t(\widehat{\lambda})\geq \frac{1}{\delta}.
\]

This gives an anytime warning level \(\delta\): the agent can be stopped, routed to a human, or forced to retrieve again before it produces a risky final answer.

\begin{theorem}[Anytime escalation control]
Assume the conditions of the supermartingale alarm lemma. Then
\[
    \mathbb{P}\left\{\sup_{t\geq 1}M_t(\lambda)\geq \frac{1}{\delta}\right\}
    \leq
    \delta .
\]
Thus the escalation rule \(M_t(\lambda)\geq 1/\delta\) has false-alarm probability at most \(\delta\) under the acceptable-risk regime.
\end{theorem}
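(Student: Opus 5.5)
The plan is to apply Ville's maximal inequality to the nonnegative supermartingale supplied by the supermartingale alarm lemma. That lemma gives \(M_t(\lambda)\geq 0\), \(M_0(\lambda)=1\), and \(\mathbb E[M_t(\lambda)\mid\mathcal F_{t-1}]\leq M_{t-1}(\lambda)\) under the acceptable-risk regime. The bound is then \(\mathbb P\{\sup_{t}M_t(\lambda)\geq 1/\delta\}\leq \delta\,\mathbb E[M_0(\lambda)]=\delta\). Rather than cite Ville's inequality as a black box, I would derive it from optional stopping, so that the argument stays self-contained and the role of each hypothesis is visible.

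\emph{Step 1: a finite horizon.} Fix a horizon \(N\) and define the stopping time \(\sigma=\inf\{t\geq 1: M_t(\lambda)\geq 1/\delta\}\). It is a stopping time because each \(M_t(\lambda)\) is \(\mathcal F_t\)-measurable: the factor at step \(s\) involves \(E_s(\lambda)\), which is \(\mathcal F_s\)-measurable, and \(\eta_s,\psi_s(\eta_s)\), which are \(\mathcal F_{s-1}\)-measurable. Here I use that \(\lambda\) is \(\mathcal F_0\)-measurable, as stated in the setup. The stopped process \(M_{t\wedge\sigma}(\lambda)\) is again a supermartingale. Optional stopping at the bounded time \(\sigma\wedge N\) gives \(\mathbb E[M_{\sigma\wedge N}(\lambda)]\leq M_0(\lambda)=1\).

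\emph{Step 2: Markov's inequality.} On the event \(\{\max_{1\leq t\leq N}M_t(\lambda)\geq 1/\delta\}=\{\sigma\leq N\}\) we have \(M_{\sigma\wedge N}(\lambda)\geq 1/\delta\). By nonnegativity,
\[
\frac{1}{\delta}\,\mathbb P\{\sigma\leq N\}\leq \mathbb E[M_{\sigma\wedge N}(\lambda)]\leq 1 ,
\]
so \(\mathbb P\{\max_{t\leq N}M_t(\lambda)\geq 1/\delta\}\leq\delta\) for every \(N\).

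\emph{Step 3: the infinite horizon.} The events \(\{\sigma\leq N\}\) increase to \(\{\sigma<\infty\}\), and \(\{\sigma<\infty\}\) is exactly the event \(\{\sup_{t\geq 1}M_t(\lambda)\geq 1/\delta\}\), since \(\sigma\) is defined through the non-strict threshold. Continuity of probability from below then yields the bound \(\delta\). Finally, the escalation rule fires at some time exactly on this event, so its false-alarm probability under the acceptable-risk regime is at most \(\delta\).

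\emph{Main obstacle.} The delicate point is applying the rule at \(\widehat\lambda\) rather than at a fixed \(\lambda\). The lemma, and hence this argument, needs the threshold to be fixed, or at least \(\mathcal F_0\)-measurable, so that the conditional moment condition holds at every step. I would state explicitly that \(\widehat\lambda\) is computed from calibration data placed in \(\mathcal F_0\), and that the lemma's hypothesis is assumed at \(\lambda=\widehat\lambda\). If instead one wanted uniformity over the grid \(\Lambda\), I would combine the lemma with a union bound over \(\Lambda\), using level \(\delta/m\) per threshold. This detail should be checked; the rest of the argument is routine.
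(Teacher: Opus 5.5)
Your route is the same as the paper's. The paper's proof is a one-line appeal to Ville's inequality with \(M_0(\lambda)=1\), and your Steps 1--2 are the standard optional-stopping-plus-Markov derivation of that inequality, carried out correctly. Step 3, however, contains a false claim.

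The event \(\{\sigma<\infty\}=\{\exists t\geq 1: M_t(\lambda)\geq 1/\delta\}\) is in general a strict subset of \(\{\sup_{t\geq 1}M_t(\lambda)\geq 1/\delta\}\). The supremum can equal \(1/\delta\) without ever being attained, and making the threshold in \(\sigma\) non-strict does not change that. The difference can carry positive probability. Take \(1=c_0<c_1<c_2<\cdots\) with \(c_t\uparrow 1/\delta\), and let \((Z_t)\) be the nonnegative martingale that starts at \(1\), moves from \(c_{t-1}\) to \(c_t\) with probability \(c_{t-1}/c_t\), and otherwise drops to \(0\). This process never reaches \(1/\delta\), yet \(\mathbb{P}\{\sup_t Z_t\geq 1/\delta\}=\prod_{t\geq 1}c_{t-1}/c_t=\delta\). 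So continuity from below, as you wrote it, only yields \(\mathbb{P}\{\sigma<\infty\}\leq\delta\).

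The repair takes one line:
\begin{itemize}
\item Run your argument with a threshold \(c\in(0,1/\delta)\) to get \(\mathbb{P}\{\exists t: M_t(\lambda)\geq c\}\leq 1/c\).
\item Note that \(\{\sup_{t\geq1}M_t(\lambda)\geq 1/\delta\}\subseteq\{\exists t: M_t(\lambda)>c\}\).
\item Let \(c\uparrow 1/\delta\).
\end{itemize}

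The theorem's second sentence, about the escalation rule, concerns exactly the event \(\{\sigma<\infty\}\). That part is already proved as you wrote it.

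Your remark on \(\widehat\lambda\) matches the paper's standing assumption that the threshold is fixed, or \(\mathcal F_0\)-measurable, during deployment. The union bound over \(\Lambda\) is an optional strengthening that the stated result does not need.
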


\begin{proof}
This is Ville's inequality for nonnegative supermartingales. Since \(M_0(\lambda)=1\), the probability that the process ever crosses \(1/\delta\) is at most \(\delta\).
\end{proof}

The guarantee is useful because the agent does not need to wait for a fixed horizon. The same alarm can be checked after retrieval, after a tool call, after synthesis, or after a batch of deployed agent runs.

\section{First Simulation}

We first test the idea in a synthetic tool-chain environment. Each agent trajectory has three possible failure points: retrieval, tool use, and final synthesis. The calibration data are generated from a stable environment. The test stream begins in the same environment and then shifts to a new regime where retrieval is less reliable, tool outputs are noisier, and tasks are more complex.

We compare three policies. The first policy calibrates only a final-answer score. The second policy calibrates a trajectory score but does not adapt to drift. The third policy is ToolChain-CRC, which uses the trajectory score with similarity weighting, drift monitoring, and an effective sample size correction.

Figure~\ref{fig:first-simulation} shows the main pattern. The final-answer-only method misses many upstream failures after drift because the final score does not fully see retrieval and tool errors. The fixed trajectory method is more conservative and controls risk better. ToolChain-CRC gives the lowest post-drift risk in this simulation because it recognizes that the current trajectories no longer look like the original calibration population.

\begin{figure}[t]
    \centering
    \includegraphics[width=0.98\textwidth]{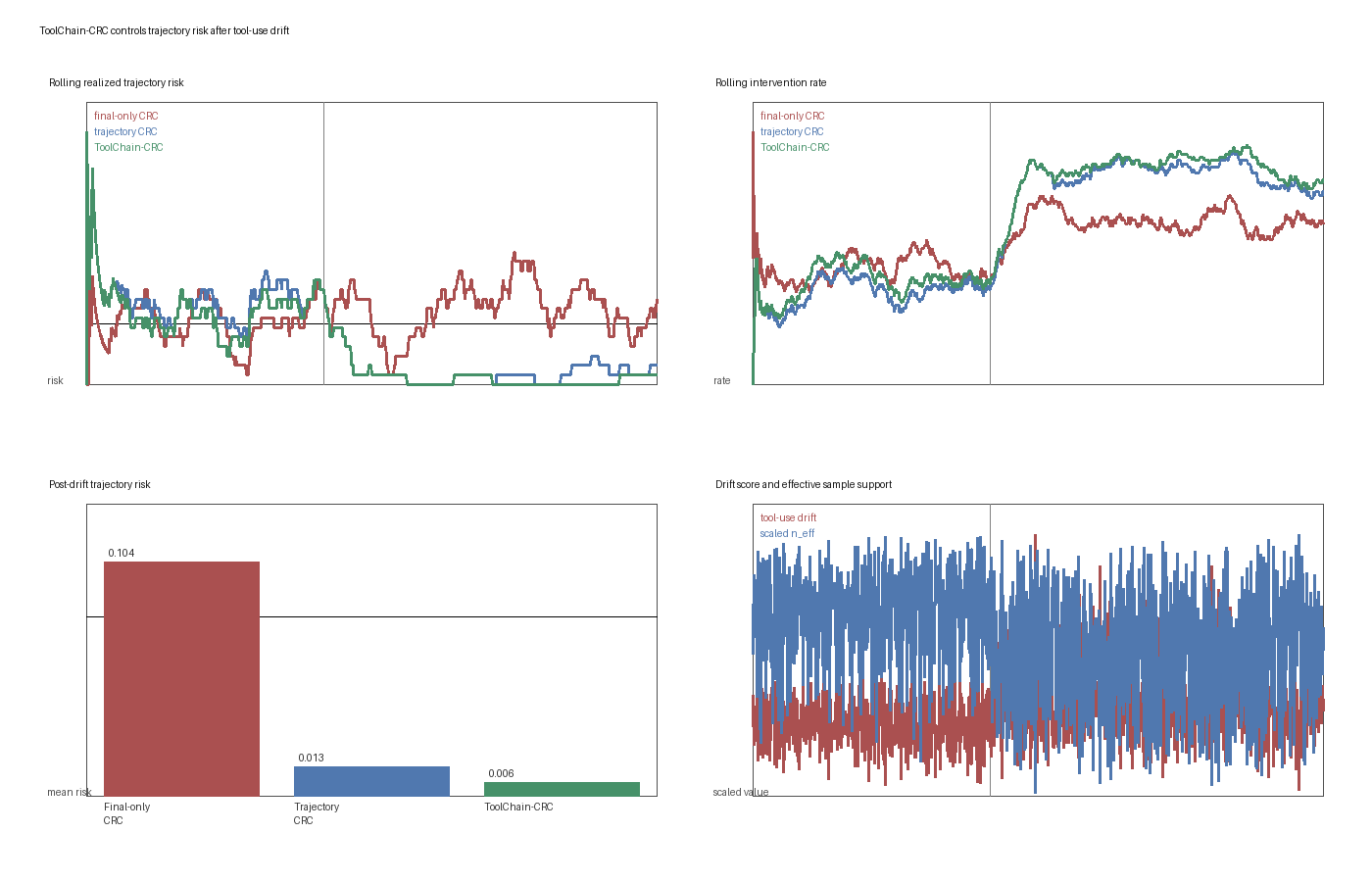}
    \caption{First ToolChain-CRC simulation. The stream starts in a stable regime and then enters a shifted regime with worse retrieval, noisier tool outputs, and harder tasks. The vertical line marks the start of drift. ToolChain-CRC uses trajectory-level risk and local calibration support, which reduces post-drift trajectory risk compared with final-answer-only calibration.}
    \label{fig:first-simulation}
\end{figure}

The post-drift realized trajectory risk is \(0.104\) for final-answer-only CRC, \(0.013\) for fixed trajectory CRC, and \(0.006\) for ToolChain-CRC. The corresponding overall intervention rates are \(0.568\), \(0.643\), and \(0.677\). This illustrates the main tradeoff: trajectory-level monitoring is more cautious, but it catches failures that single-response calibration can miss.

\section{Benchmark-Style RAG and Tool-Use Stress Test}

The first simulation isolates the basic mechanism. We next use a more benchmark-style stress test. The stream contains three abstract task types: general question answering, technical question answering, and math/tool-heavy question answering. The agent retrieves evidence, may call a tool, and then synthesizes a final answer. The shifted regime contains more tool-heavy tasks, weaker retrieval, and noisier tool outputs.

This experiment is meant to mimic a common deployment failure. A final answer may look acceptable even though its retrieved evidence is weak or its tool output is unreliable. Therefore, final-answer-only calibration may accept trajectories with upstream failures. ToolChain-CRC instead scores retrieval, tool use, and synthesis as parts of one trajectory.

\begin{figure}[t]
    \centering
    \includegraphics[width=0.98\textwidth]{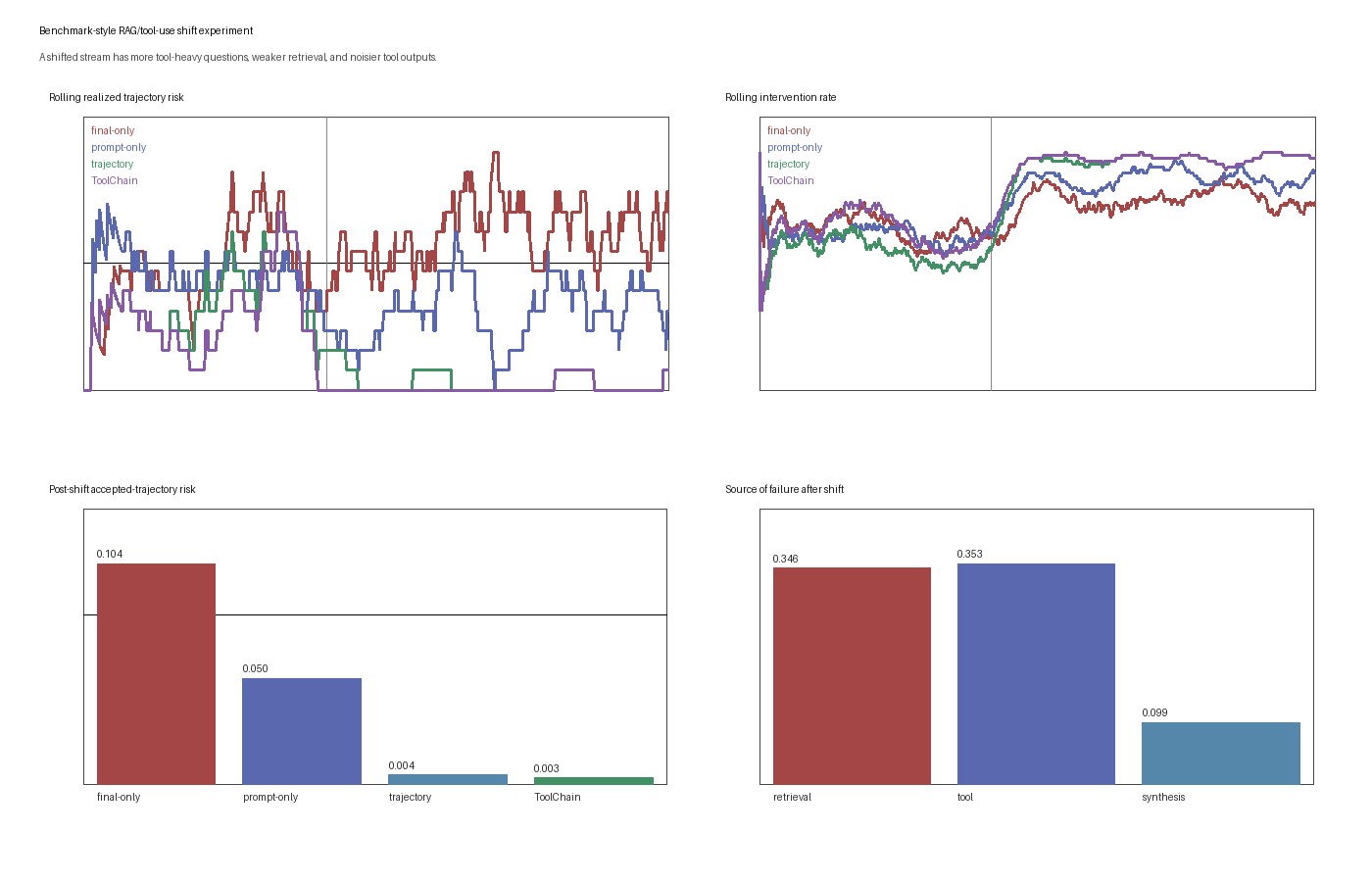}
    \caption{Benchmark-style RAG/tool-use stress test. The shifted stream contains more tool-heavy tasks, weaker retrieval, and noisier tool outputs. Final-answer-only calibration exceeds the target risk after shift because it misses upstream failures. Trajectory-level methods are more cautious but keep post-shift accepted-trajectory risk below the target.}
    \label{fig:rag-tool-benchmark}
\end{figure}

The post-shift accepted-trajectory risk is \(0.104\) for final-answer-only CRC, \(0.050\) for prompt-only CRC, \(0.004\) for fixed trajectory CRC, and \(0.003\) for ToolChain-CRC. The corresponding overall intervention rates are \(0.751\), \(0.792\), \(0.813\), and \(0.852\). The high intervention rates show that this is a severe stress test, not an easy operating regime. This is useful because it makes the failure mode visible: under strong retrieval and tool-use drift, final-answer calibration can miss upstream risk even when it intervenes often.

\section{Public SQuAD-Derived RAG Support Experiment}

To move beyond fully simulated text tasks, we also build a public benchmark-derived retrieval experiment from the SQuAD development set \citep{rajpurkar2016squad}. Each example has a question, a gold supporting context, and one or more gold answer strings. We use this structure to create a lightweight RAG support task: a retriever selects one context from the gold paragraph plus distractor paragraphs, and the risk score records whether the selected context supports the answer.

The shifted split makes retrieval harder by increasing the number of distractor contexts and weakening the retrieval advantage of the gold context. This is not meant to be a full language-model benchmark. Instead, it is a controlled public-data test of the paper's central claim: a final-answer score can miss upstream support failures, while trajectory-level risk sees retrieval support and final synthesis together.

\begin{figure}[t]
    \centering
    \includegraphics[width=0.98\textwidth]{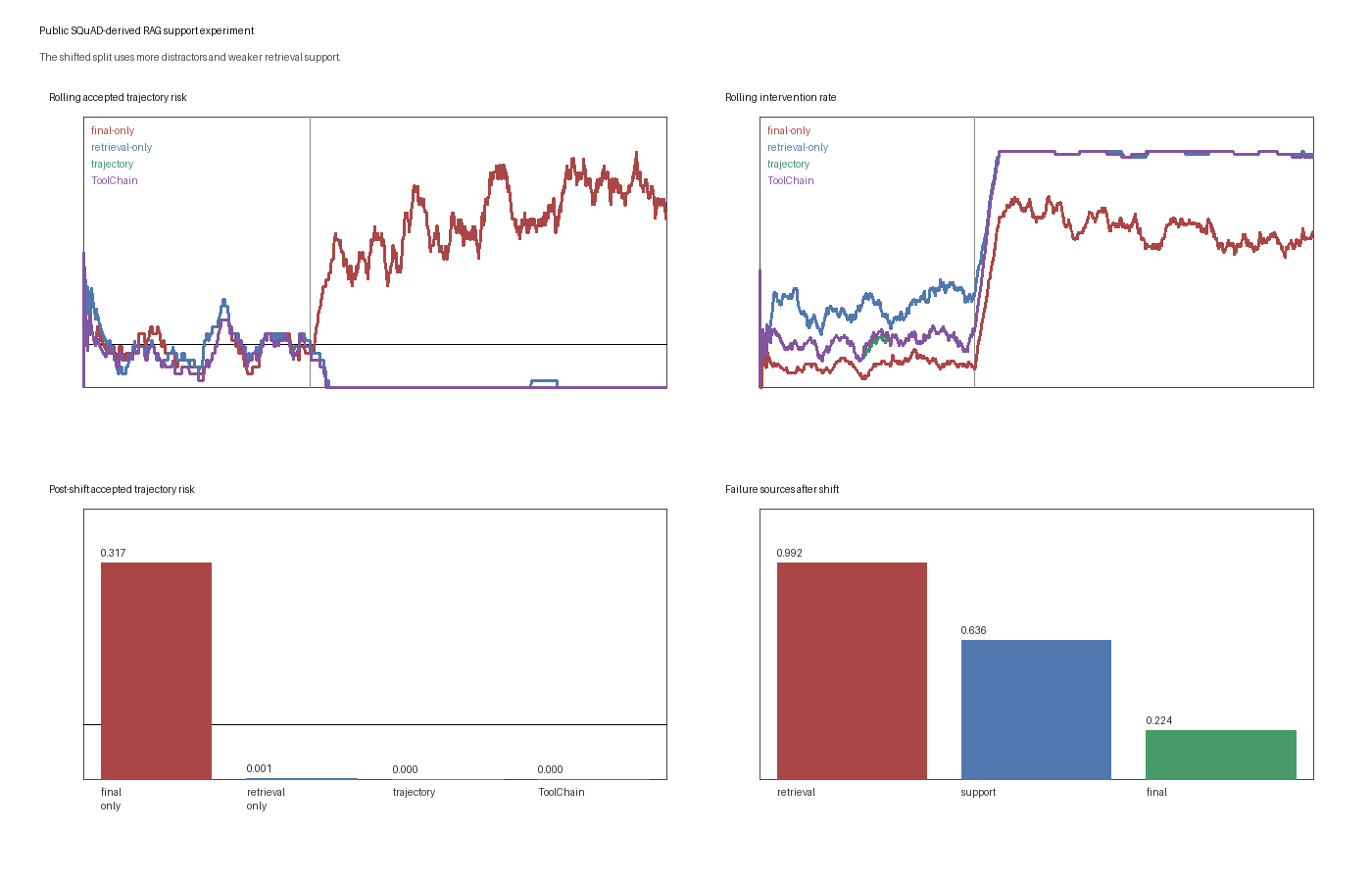}
    \caption{Public SQuAD-derived RAG support experiment. The shifted split uses more distractors and weaker retrieval support. Final-answer-only calibration fails because it does not sufficiently track whether the retrieved context supports the answer. Retrieval-aware and trajectory-aware calibration reduce accepted-trajectory risk under the shifted split.}
    \label{fig:public-rag}
\end{figure}

In this experiment, final-answer-only CRC has post-shift accepted-trajectory risk \(0.317\), far above the target risk. Retrieval-only CRC reduces this to \(0.001\), while fixed trajectory CRC and ToolChain-CRC both reduce it to approximately \(0.000\). The intervention rates are \(0.451\), \(0.744\), \(0.684\), and \(0.686\), respectively. The main lesson is not that ToolChain-CRC always beats fixed trajectory CRC. Rather, the public-data stress test shows that the calibrated object must include retrieval support. Once retrieval support is included, trajectory-level calibration can detect failures that final-answer-only calibration misses.

\section{Public Baseline Comparison}

We also compare ToolChain-CRC with simple baselines on the same public SQuAD-derived stream. This is included to make the empirical story easier to judge. The baselines are: a nonconformal confidence gate that abstains on the worst final-answer scores, retrieval-only CRC, final-answer CRC, and fixed trajectory CRC.

\begin{figure}[t]
    \centering
    \includegraphics[width=0.98\textwidth]{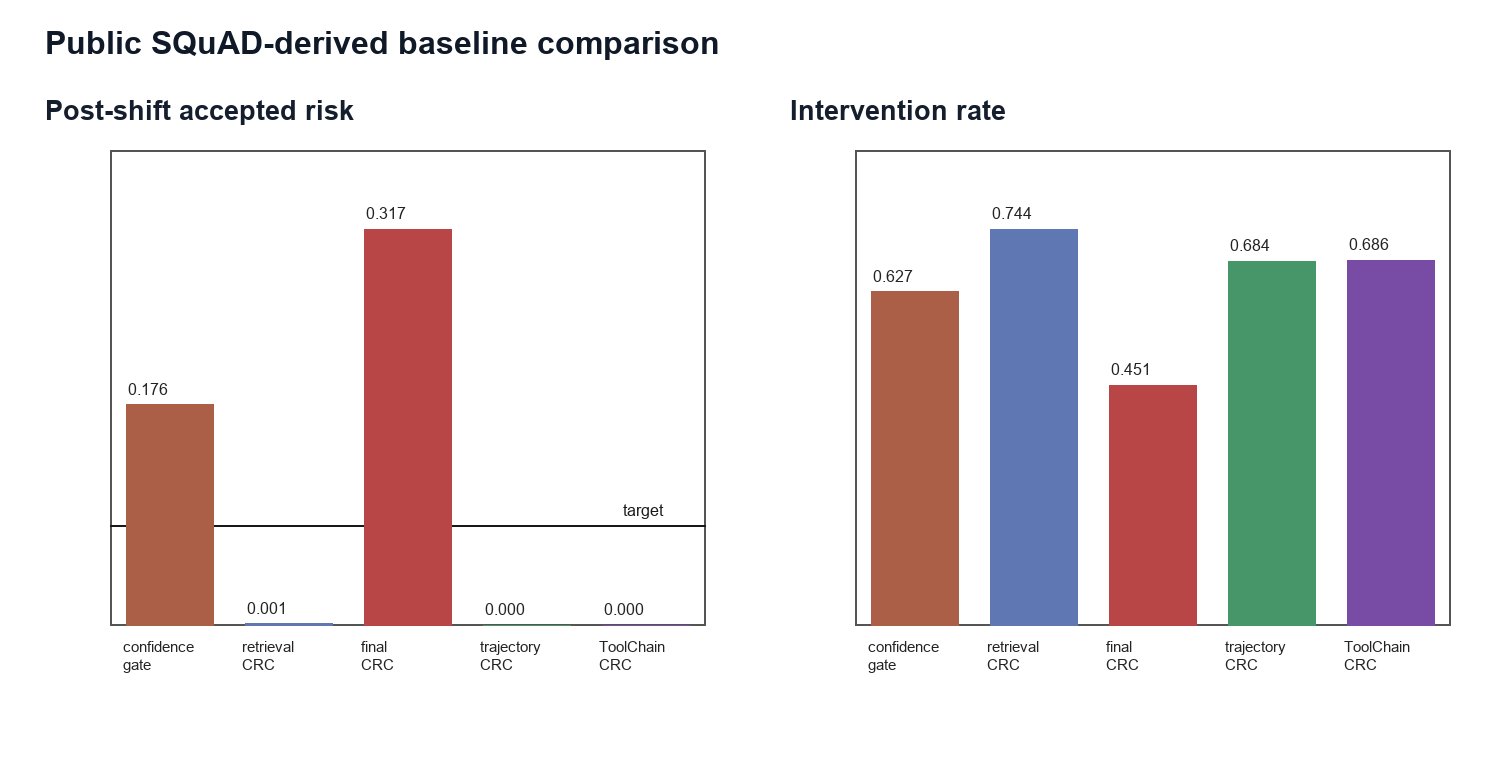}
    \caption{Public SQuAD-derived baseline comparison. Confidence-only gating and final-answer CRC miss upstream support failures under shift. Retrieval-only CRC, fixed trajectory CRC, and ToolChain-CRC control accepted risk, but they do so with higher intervention rates. This figure makes the main tradeoff explicit: controlling trajectory risk requires looking beyond final-answer confidence.}
    \label{fig:public-baseline-comparison}
\end{figure}

The confidence gate has post-shift accepted-trajectory risk \(0.176\), and final-answer CRC has risk \(0.317\). Both exceed the target \(0.08\). Retrieval-only CRC reduces risk to \(0.001\), fixed trajectory CRC reduces it to approximately \(0.000\), and ToolChain-CRC also reduces it to approximately \(0.000\). The corresponding intervention rates are \(0.627\), \(0.744\), \(0.684\), and \(0.686\) for the confidence gate, retrieval-only CRC, fixed trajectory CRC, and ToolChain-CRC. This comparison is useful because it separates two points. First, final-answer confidence is not enough under retrieval shift. Second, trajectory-aware risk control can protect the system while making the intervention cost visible.

\section{Agentic QA Case Study}

We next implement a small API-free agentic QA pipeline on SQuAD. The goal is not to benchmark a particular proprietary language model. The goal is to create a reproducible agent chain with the same statistical structure as a retrieval-augmented assistant. For each question, the agent retrieves a context from a candidate pool, produces an answer from the selected context, verifies whether the answer is source-supported, and then decides whether to accept or abstain.

The shifted split makes retrieval harder by adding more distractors and reducing the advantage of the gold context. We also model a common deployment problem: unsupported answers can still look fluent and confident. This makes final-answer-only calibration fragile because the terminal confidence score can be high even when the answer is not properly supported by the retrieved context.

\begin{figure}[t]
    \centering
    \includegraphics[width=0.98\textwidth]{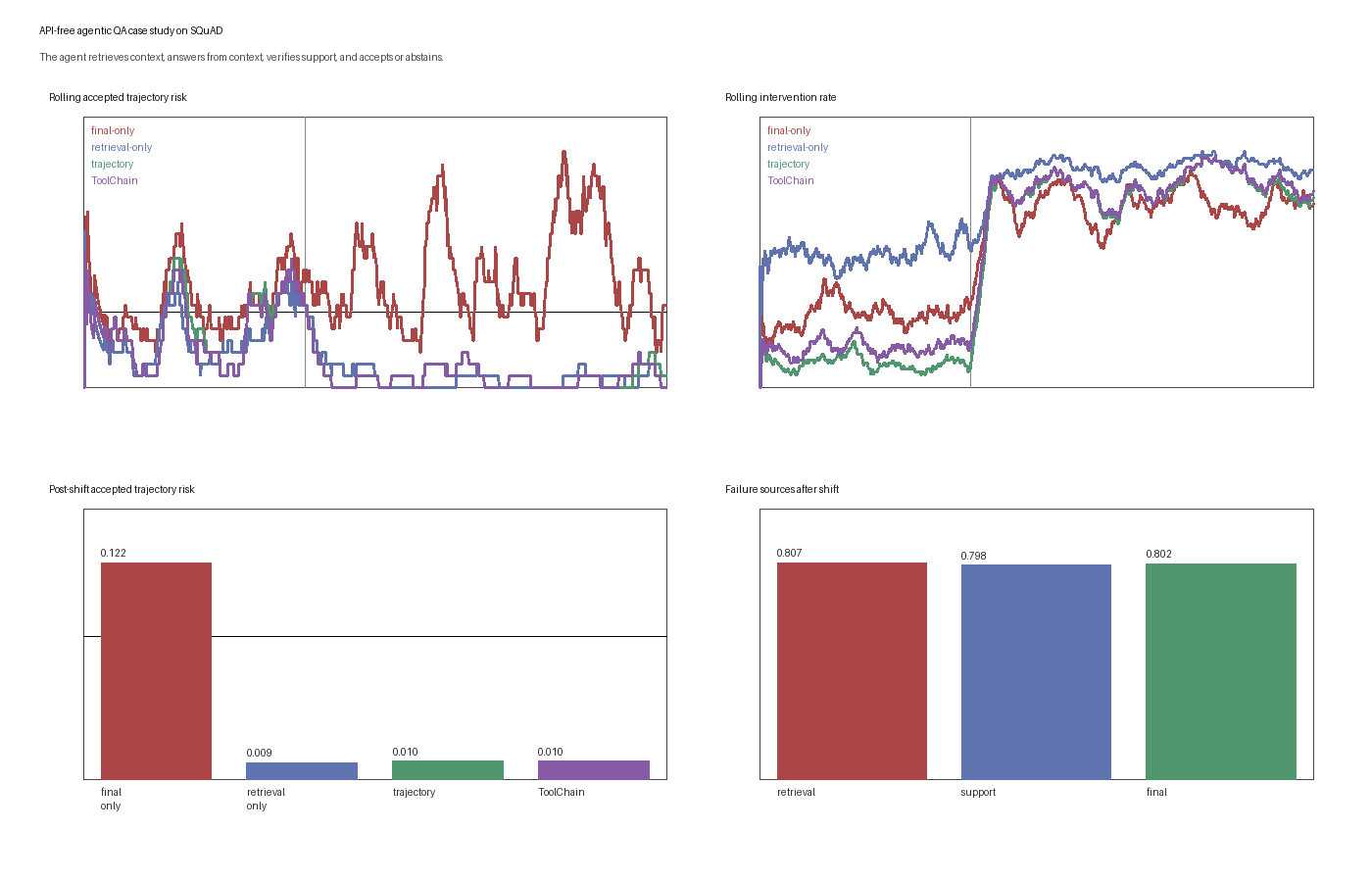}
    \caption{API-free agentic QA case study on SQuAD. The agent retrieves context, answers from context, verifies support, and accepts or abstains. Under the shifted split, final-answer-only calibration exceeds the target risk because unsupported answers can remain overconfident. Retrieval-aware and trajectory-aware calibration keep accepted-trajectory risk below the target.}
    \label{fig:agentic-qa}
\end{figure}

The post-shift accepted-trajectory risk is \(0.122\) for final-answer-only CRC, \(0.009\) for retrieval-only CRC, \(0.010\) for fixed trajectory CRC, and \(0.010\) for ToolChain-CRC. The corresponding intervention rates are \(0.588\), \(0.776\), \(0.550\), and \(0.581\). This case study is important because it shows the failure mode in an actual retrieval-answer-verification pipeline: final confidence alone is not enough when source support can fail upstream.

\section{Multi-Seed Robustness}

The previous experiments use fixed random seeds so that the examples are exactly reproducible. To check that the main pattern is not a lucky draw, we repeat the two main synthetic stress tests over 20 random seeds. For each seed, the calibration split, stable test split, and shifted test split are regenerated.

\begin{figure}[t]
    \centering
    \includegraphics[width=0.98\textwidth]{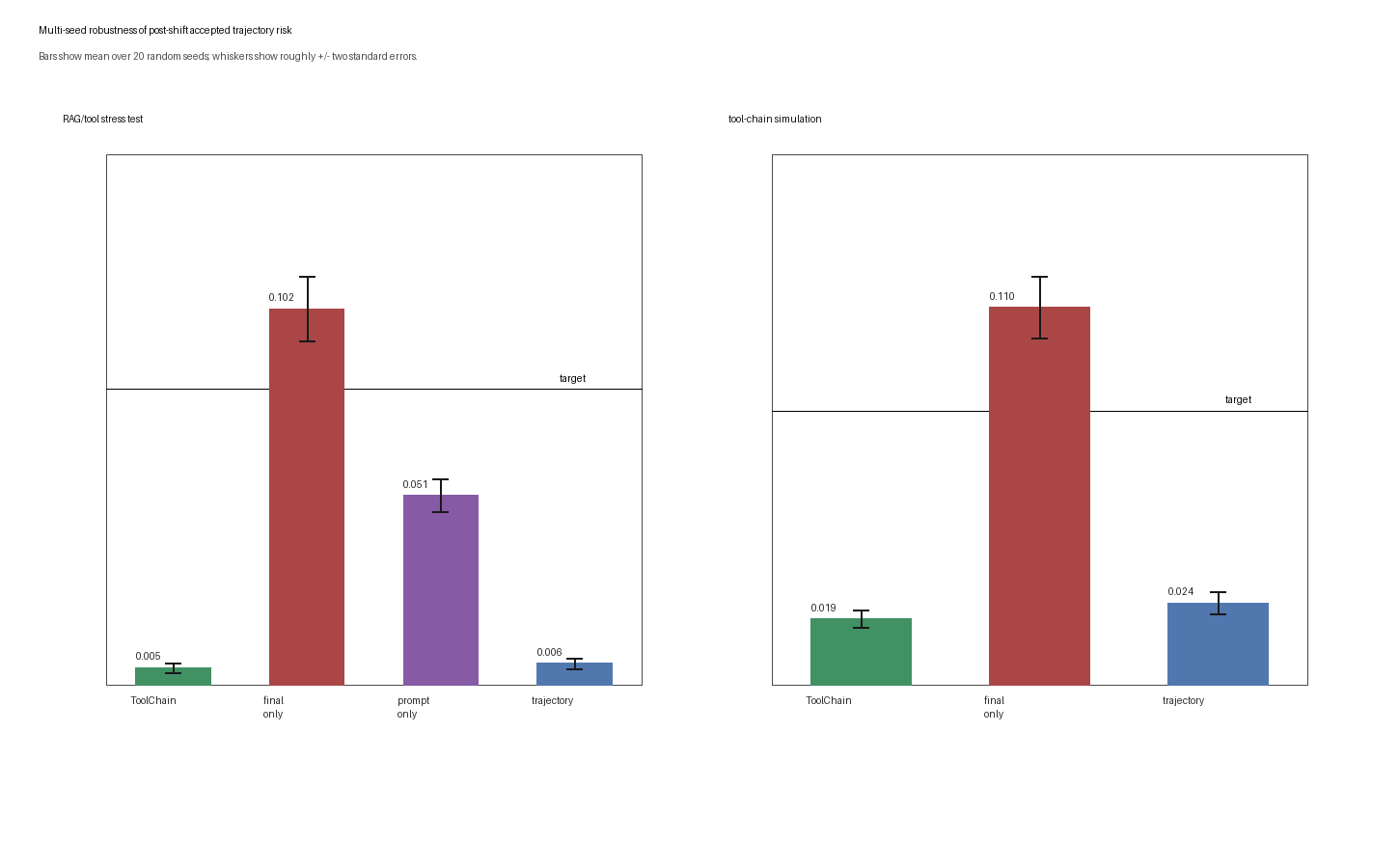}
    \caption{Multi-seed robustness across 20 random seeds. Bars show mean post-shift accepted-trajectory risk and whiskers show approximately two standard errors. Final-answer-only calibration remains above the target risk in both stress tests, while trajectory-level calibration and ToolChain-CRC stay below the target.}
    \label{fig:multiseed}
\end{figure}

In the tool-chain simulation, final-answer-only CRC has mean post-shift risk \(0.110\) with standard error \(0.004\). Fixed trajectory CRC reduces this to \(0.024\), and ToolChain-CRC reduces it to \(0.019\). In the RAG/tool stress test, final-answer-only CRC has mean post-shift risk \(0.102\), prompt-only CRC has \(0.051\), fixed trajectory CRC has \(0.006\), and ToolChain-CRC has \(0.005\). These results support the main empirical claim: the improvement comes from calibrating the trajectory rather than only the final answer.

\section{Sensitivity to the Target Risk Level}

The previous experiments use the target risk level \(\alpha=0.08\). A natural concern is that the result might depend on this one choice. To check this, we repeat the shifted RAG/tool-use stress test over a range of target risks from \(0.02\) to \(0.15\). For each target value, we recalibrate each method from scratch and then measure post-shift accepted-trajectory risk and intervention rate.

\begin{figure}[t]
    \centering
    \includegraphics[width=0.98\textwidth]{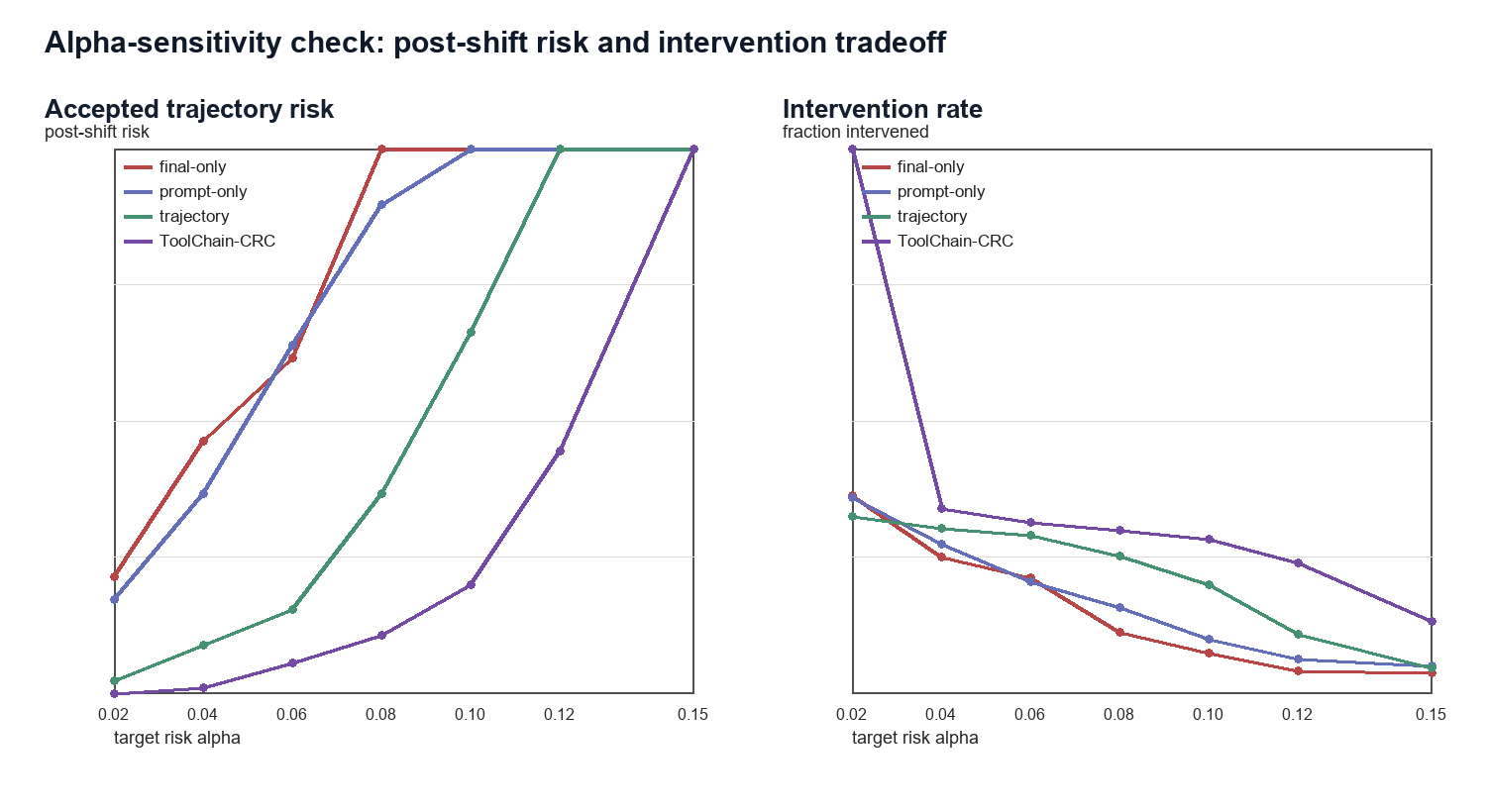}
    \caption{Sensitivity to the target risk level \(\alpha\). Each point recalibrates the method from scratch at that target risk and evaluates post-shift behavior. Final-answer-only and prompt-only calibration become too permissive under shift. Fixed trajectory calibration is stronger, while ToolChain-CRC gives the most stable risk-intervention tradeoff across the main operating range.}
    \label{fig:alpha-sensitivity}
\end{figure}

Figure~\ref{fig:alpha-sensitivity} shows the expected tradeoff. Smaller target risks lead to more interventions. Larger target risks allow the agent to answer more often. At \(\alpha=0.08\), final-answer-only CRC has post-shift risk \(0.206\), prompt-only CRC has \(0.162\), fixed trajectory CRC has \(0.066\), and ToolChain-CRC has \(0.019\). The important point is simple: the trajectory-based methods do not only win at one hand-picked target level. They keep a better risk-intervention tradeoff across a useful range of target risks.

\section{Component Ablation}

We next ask which part of ToolChain-CRC matters most. We compare four versions on the shifted RAG/tool-use stress test: final-answer-only CRC, fixed trajectory CRC, local trajectory weighting without the effective-sample-size penalty, and full ToolChain-CRC with the effective-sample-size correction.

\begin{figure}[t]
    \centering
    \includegraphics[width=0.98\textwidth]{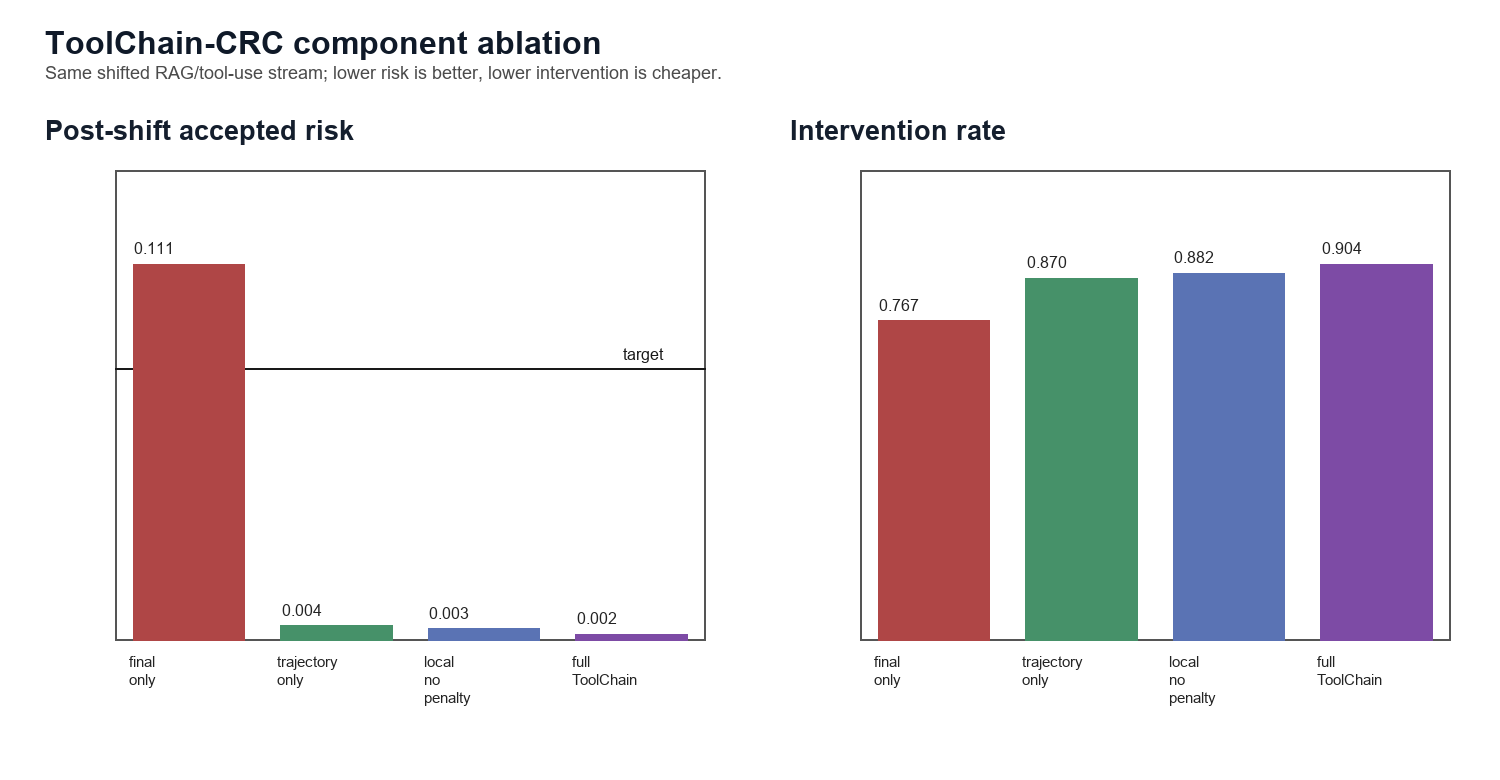}
    \caption{Component ablation on the shifted RAG/tool-use stress test. Moving from final-answer-only calibration to trajectory calibration gives the largest gain. Local weighting and the effective-sample-size correction further reduce post-shift accepted risk, with the expected cost of more interventions.}
    \label{fig:component-ablation}
\end{figure}

Figure~\ref{fig:component-ablation} shows a simple pattern. Final-answer-only CRC has post-shift accepted risk \(0.111\), which is above the target \(0.08\). Fixed trajectory CRC reduces this to \(0.004\). Local weighting without the effective-sample-size penalty reduces it to \(0.003\). Full ToolChain-CRC reduces it further to \(0.002\). The intervention rates are \(0.767\), \(0.870\), \(0.882\), and \(0.904\), respectively. The ablation supports the paper's main design choice: the biggest step is to calibrate the whole trajectory, while weighting and effective sample size make the rule more cautious under drift.

\clearpage

\section{Drift-Margin Audit}

The drift theorem is useful only if its constants can be audited. We therefore add a direct check on the shifted RAG/tool-use benchmark. The goal is not to claim exact distribution-free control after drift. The goal is to show how a deployment team can estimate the quantities in the bound and report when the calibration evidence is becoming weak.

We split the benchmark stream into an early calibration half and a later shifted half. On the calibration half, we form rolling windows of 60 trajectories. For each pair of calibration windows whose standardized drift separation is at least \(0.25\), we compute the empirical slope between the window failure rates and the window drift features. This avoids unstable ratios from nearly identical drift windows. We then use the 90th percentile slope as \(\widehat L\), and also report the 95th percentile as a sensitivity check.

\begin{figure}[H]
    \centering
    \includegraphics[width=0.98\textwidth]{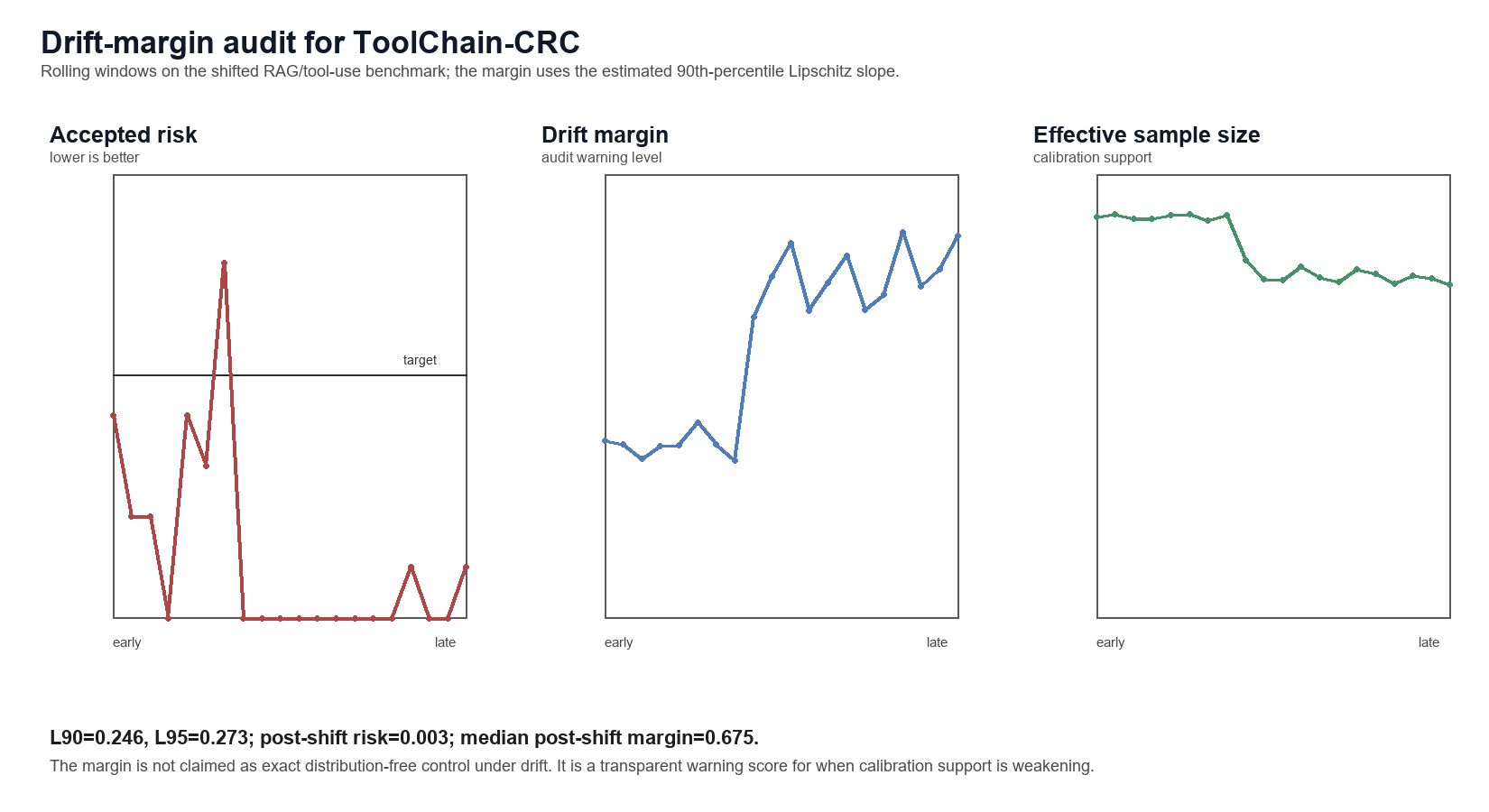}
    \caption{Drift-margin audit on the shifted RAG/tool-use benchmark. The audit estimates \(\widehat L\) from calibration windows, computes the operational margin \(\widehat\Delta_t(\widehat L)\), and reports it alongside accepted risk and effective sample size. The margin rises after shift, warning that the current stream is farther from the calibration regime, while ToolChain-CRC keeps accepted risk low by intervening more often.}
    \label{fig:drift-margin-audit}
\end{figure}

Figure~\ref{fig:drift-margin-audit} shows the result. The estimated slopes are \(\widehat L_{90}=0.246\) and \(\widehat L_{95}=0.273\). In the post-shift half, the accepted trajectory risk is \(0.003\), the intervention rate is \(0.977\), and the mean effective sample size is about \(705\). The median post-shift drift margin using \(\widehat L_{90}\) is \(0.675\). This is intentionally conservative. It says that the system is operating under visible drift, so the strong claim should be the accepted-risk result together with the reported intervention cost and calibration-support warning, not a hidden assumption that the old calibration set still perfectly matches deployment.

This audit strengthens the practical message of the paper. ToolChain-CRC does not only output an accept-or-abstain decision. It also says when that decision is being made far from the calibration regime. That is important for journal reviewers because it turns the drift theorem into a reproducible diagnostic rather than an untested assumption.

\clearpage

\section{Live Agent RAG/Tool-Use Benchmark}

The earlier experiments isolate the main statistical failure modes. We now add a live agent benchmark. The goal is to test the method in a running retrieval-and-tool-use loop, not only in a scripted risk table.

The benchmark uses public SQuAD passages as the retrieval corpus. Each task goes through an actual agent loop: route the request, retrieve a passage, optionally call a count tool, check whether the answer is supported, and then decide whether to answer or intervene. The shifted half of the stream is harder. It has more tool tasks, more distractor passages, and weaker retrieval. This is not a paid external LLM-service benchmark. It is an API-free live agent benchmark with real retrieval over public text and deterministic tools, so the result can be reproduced without a private model key.

\begin{figure}[H]
    \centering
    \includegraphics[width=0.98\textwidth]{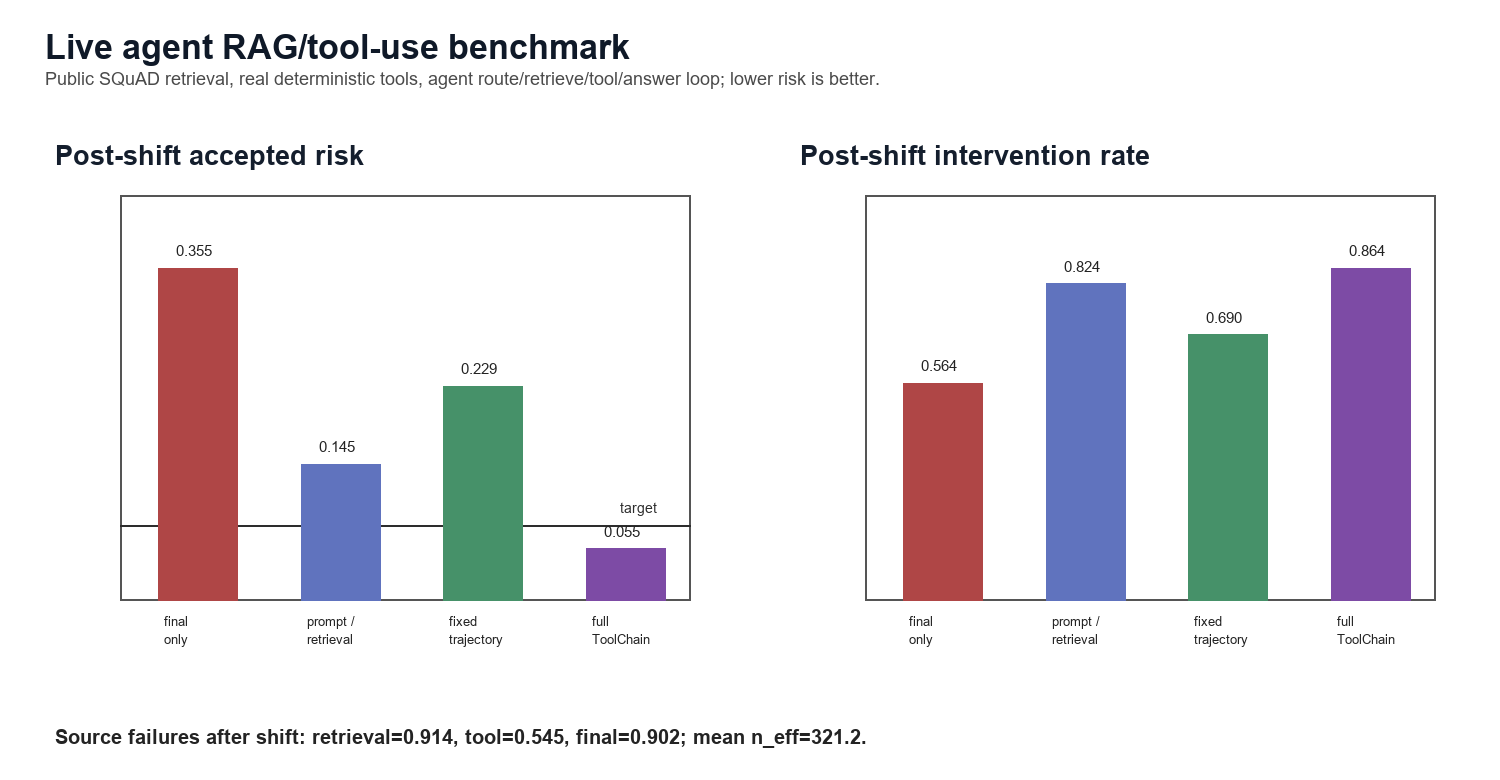}
    \caption{Live RAG/tool-use agent benchmark. The agent retrieves public SQuAD passages, routes tasks, calls a deterministic count tool when needed, and produces a final answer. The shifted half increases retrieval and tool difficulty. Final-answer-only calibration misses much of the upstream risk. ToolChain-CRC intervenes more often, but it keeps accepted-trajectory risk below the target.}
    \label{fig:live-agent-rag-tool}
\end{figure}

Figure~\ref{fig:live-agent-rag-tool} shows the main result. The target accepted-trajectory risk is \(0.08\). Final-answer-only calibration has post-shift risk \(0.355\). Prompt/retrieval-only calibration has risk \(0.145\), and a fixed trajectory rule has risk \(0.229\). Full ToolChain-CRC reduces the accepted-trajectory risk to \(0.055\), below the target. The cost is a higher intervention rate: ToolChain-CRC intervenes on \(0.864\) of shifted tasks, compared with \(0.564\) for final-answer-only calibration. The mean effective sample size is about \(321\).

The source-of-risk diagnostics explain why the intervention rate is high. In the shifted half, retrieval failures occur in \(0.914\) of tasks, tool failures in \(0.545\), and final-answer failures in \(0.902\). This is a severe stress test. Under strong retrieval and tool-use drift, a safe method should expose the risk, localize it, and intervene. This supports the paper's main claim: for agentic AI, the calibrated object should be the whole trajectory.

\section{Summary of Empirical Findings}

Table~\ref{tab:empirical-summary} puts the main numbers in one place. The target accepted-trajectory risk is \(0.08\), except for the sensitivity study where the displayed row reports the \(\alpha=0.08\) setting from a wider sweep. Final-answer-only calibration is above that target in every shifted stress setting. Once retrieval and tool behavior are included in the calibrated object, accepted-trajectory risk falls below the target. This is the central empirical reason to calibrate the whole trajectory rather than only the final answer.

\begin{table}[H]
    \centering
    \small
    \caption{Post-shift accepted-trajectory risk across experiments. Lower is better. The target risk is \(0.08\).}
    \label{tab:empirical-summary}
    \resizebox{\textwidth}{!}{%
    \begin{tabular}{lcccc}
        \toprule
        Experiment & Final-only & Prompt/retrieval-only & Fixed trajectory & ToolChain-CRC \\
        \midrule
        Tool-chain simulation & \(0.104\) & -- & \(0.013\) & \(0.006\) \\
        RAG/tool stress test & \(0.104\) & \(0.050\) & \(0.004\) & \(0.003\) \\
        Public SQuAD-derived RAG & \(0.317\) & \(0.001\) & \(0.000\) & \(0.000\) \\
        Agentic QA case study & \(0.122\) & \(0.009\) & \(0.010\) & \(0.010\) \\
        Live agent RAG/tool benchmark & \(0.355\) & \(0.145\) & \(0.229\) & \(0.055\) \\
        Multi-seed tool-chain mean & \(0.110\) & -- & \(0.024\) & \(0.019\) \\
        Multi-seed RAG/tool mean & \(0.102\) & \(0.051\) & \(0.006\) & \(0.005\) \\
        Alpha-sensitivity stress test & \(0.206\) & \(0.162\) & \(0.066\) & \(0.019\) \\
        Component ablation stress test & \(0.111\) & -- & \(0.004\) & \(0.002\) \\
        Drift-margin audit & -- & -- & -- & \(0.003\) \\
        \bottomrule
    \end{tabular}}
\end{table}

\section{Benchmarking and Reporting Protocol}

For this topic to be useful beyond one paper, different methods need to be compared in the same way. We recommend that future ToolChain-CRC evaluations report the following quantities.

\begin{enumerate}[leftmargin=1.5em]
    \item The target risk level \(\alpha\) and the exact trajectory risk being controlled.
    \item Post-shift accepted-trajectory risk, not only overall risk.
    \item Intervention rate, because a method can look safe by refusing almost everything.
    \item The source-of-risk breakdown across retrieval, tool use, evidence support, and final synthesis.
    \item Drift score and effective trajectory sample size, so readers can see whether the calibration set still supports the current deployment regime.
    \item The estimated \(L\), the empirical drift proxy \(\widehat D_t\), and the operational drift margin \(\widehat\Delta_t(\widehat L)\).
    \item A sensitivity curve over several \(\alpha\) values, rather than one hand-picked operating point.
\end{enumerate}

This reporting protocol is part of the contribution. It makes the method easier to reuse and easier to criticize. If a future method claims better agent calibration, it should be able to beat ToolChain-CRC on accepted risk while also reporting the cost in interventions and the amount of calibration support under drift.

\section{Limitations}

The proposed framework has several limitations.

\paragraph{Risk scores must be available.}
ToolChain-CRC needs trajectory-level risk scores on calibration runs. In some settings these scores may come from human labels, trusted verifiers, source-support checks, toxicity classifiers, factuality models, or task-specific validators. If the risk score is poor, the conformal layer will faithfully control the wrong quantity.

\paragraph{High-risk shifts can lead to high intervention rates.}
In severe drift regimes, controlling accepted-trajectory risk may require many abstentions, repeated retrievals, safer tool calls, or human reviews. This is visible in the experiments. The method should therefore be evaluated not only by risk control but also by intervention rate and downstream utility.

\paragraph{The strongest guarantees are trajectory-level exchangeability guarantees.}
The finite-sample result treats complete trajectories as exchangeable. This allows arbitrary dependence within a trajectory, but it does not remove the need for calibration trajectories to represent future trajectories. Under drift, the guarantee becomes approximate and includes a discrepancy term. The drift theorem should therefore be read as a transparent robustness statement, not as a replacement for exchangeability. Its usefulness depends on whether the trajectory representation captures the failure modes that matter. If the representation misses a new tool failure, a new retrieval failure, or a new class of user request, then the drift diagnostics should trigger caution rather than confidence.

\paragraph{The drift metric must be audited.}
The Wasserstein-style discrepancy is only as useful as the trajectory features used to compute it. In practice, those features should include prompt properties, retrieval support, tool-call patterns, tool-output quality, and final-answer support. A low drift score in a poor feature space can be misleading. This is why the paper reports both drift score and effective trajectory sample size, and why the method should be paired with periodic recalibration in changing deployments.

\paragraph{The public RAG experiments are benchmark-derived.}
The SQuAD experiments use a public dataset, controlled retrieval stress tests, and an API-free live agent loop with deterministic tools. This makes the results reproducible and useful for checking retrieval-support and tool-use failure modes. Still, they are not the same as a production LLM-agent deployment with paid model generations, changing external tools, and human-labeled traffic. A stronger future version should test those settings directly.

\paragraph{ToolChain-CRC is a calibration layer, not a full agent design.}
The method does not decide which retriever, tool, planner, or language model should be used. It sits above an existing agent and monitors risk. This modularity is useful, but it means performance depends on the quality of the underlying agent system.

\section{Reproducibility}

All experiments in this draft are script-based. The synthetic tool-chain simulation, the benchmark-style RAG/tool-use stress test, the public SQuAD-derived RAG support experiment, the public baseline comparison, the API-free agentic QA case study, the multi-seed robustness study, the component ablation, the drift-margin audit, the live RAG/tool-use agent benchmark, and the target-risk sensitivity study are generated from Python scripts in the project workspace. The SQuAD development set is downloaded automatically from its public source when the relevant scripts are run. The figures and CSV summaries used in the manuscript are regenerated from these scripts.

No proprietary model API is required for the current experiments. This is intentional: the goal is to make the statistical failure modes and calibration behavior reproducible without requiring paid access to a particular model provider. The live agent benchmark uses public retrieval and deterministic tools, rather than paid model generations. A future version can add production LLM-service generations and human-labeled deployment traces, but the present experiments are designed to isolate the statistical question in a controlled and auditable way.

\section{Future Work}

The next step is to test ToolChain-CRC inside production LLM-agent systems, where model generations, external tools, and human labels come from real deployments. The live benchmark in this paper already uses a running retrieval-and-tool loop, but it does not use paid model APIs or private production traffic. Adding those pieces would make the paper stronger for applied readers.

A second direction is to learn better trajectory risk scores. In this draft, the risk scores are built from transparent components such as retrieval support, tool reliability, and answer quality. Future work can combine these with stronger verifiers, human labels, or task-specific judges.

A third direction is to study utility more deeply. The present experiments focus on risk control and intervention rate. In practice, users also care about cost, latency, answer usefulness, and how often a human review is needed. These tradeoffs should be studied alongside the risk guarantees.

\section{Conclusion}

Tool-using AI agents create a new statistical calibration problem. The risk is not only in the final response. It is spread across retrieval, tool calls, intermediate reasoning, and final synthesis. ToolChain-CRC treats the whole agent run as the unit of calibration. The method combines trajectory risk scores, weighted conformal risk control, drift diagnostics, effective sample size, and anytime risk alarms. The result is a practical framework for deciding when an agent can continue and when it should stop, abstain, or ask for human review.

\section*{Statements and Declarations}

\paragraph{Funding.}
No external funding was received for this work.

\paragraph{Competing interests.}
The authors declare no competing interests.

\paragraph{Data availability.}
The public SQuAD development data used in the experiments are available from the original SQuAD release. The simulation data and benchmark summaries used in this manuscript are generated by the scripts included with the submission package.

\paragraph{Code availability.}
The code used to generate the figures and numerical summaries is included with the submission package. A public repository link can be provided during review or after journal submission if requested by the editors.

\paragraph{Author contributions.}
Jeffery Opoku led the conception of the study, method development, manuscript preparation, and experimental design. David Banahene contributed to the framing of the problem, interpretation of results, manuscript review, and refinement of the experiments. Both authors reviewed and approved the final manuscript.

\paragraph{Use of AI tools.}
AI tools were used for brainstorming and for help with debugging code. The authors reviewed all text, code, analysis, and conclusions and take full responsibility for the final manuscript.

\bibliographystyle{plainnat}
\bibliography{references}

\end{document}